\definecolor{mygray}{HTML}{E9E9E9}
\DeclareRobustCommand{\circlelabel}[3]{%
  \tikz[baseline=(char.base)]{
    \node[shape=circle, draw=#2, fill=#1, text=black, inner sep=1pt] (char) {#3};
  }%
}
\DeclareRobustCommand{\rectlabel}[2]{%
  \tikz[baseline=(char.base)]{
    \node[shape=rectangle, fill=#1, text=black, inner sep=2pt] (char) {#2};
  }%
}
\definecolor{fig1blue}{HTML}{DAE8FC}
\definecolor{fig1bluedark}{HTML}{6C8EBF}
\definecolor{fig1orange}{HTML}{FFE6CC}
\definecolor{fig1orangedark}{HTML}{D79B00}
\definecolor{fig1purple}{HTML}{E1D5E7}
\definecolor{fig1purpledark}{HTML}{9673A6}
\definecolor{fig1green}{HTML}{D5E8D4}
\definecolor{fig1greendark}{HTML}{82B366}
\definecolor{fig1red}{HTML}{F8CECC}
\definecolor{fig1reddark}{HTML}{B85450}
\newcommand\lft{\mathopen{}\left}
\newcommand\rgt{\aftergroup\mathclose\aftergroup{\aftergroup}\right}
\DeclarePairedDelimiter{\abs}{\lvert}{\rvert}
\DeclarePairedDelimiter{\norm}{\lVert}{\rVert}
\DeclarePairedDelimiterX{\inner}[2]{\langle}{\rangle}{#1,#2}
\let\oldabs\abs
\def\abs{\@ifstar{\oldabs}{\oldabs*}}
\definecolor{tabfirst}{rgb}{1, 0.7, 0.7} %
\definecolor{tabsecond}{rgb}{1, 0.85, 0.7} %
\definecolor{tabthird}{rgb}{1, 1, 0.7} %
\definecolor{figred}{HTML}{c8526a}
\definecolor{figblue}{HTML}{599ec4}
\definecolor{figgreen}{HTML}{7bae72}
\def\adl@drawiv#1#2#3{%
        \hskip.5\tabcolsep
        \xleaders#3{#2.5\@tempdimb #1{1}#2.5\@tempdimb}%
                #2\z@ plus1fil minus1fil\relax
        \hskip.5\tabcolsep}
\newcommand{\cdashlinelr}[1]{%
  \noalign{\vskip\aboverulesep
           \global\let\@dashdrawstore\adl@draw
           \global\let\adl@draw\adl@drawiv}
  \cdashline{#1}
  \noalign{\global\let\adl@draw\@dashdrawstore
           \vskip\belowrulesep}}
\renewcommand{\paragraph}{%
  \@startsection{paragraph}{4}%
  {\z@}{0ex \@plus 1ex \@minus .2ex}{-1em}%
  {\normalfont\normalsize\bfseries}%
}
\def\eqref#1{equation~\ref{#1}}
\def\1{\bm{1}}
\def\rva{{A}}
\def\rvk{{K}}
\def\rvn{{N}}
\def\rvq{{Q}}
\def\rvr{{R}}
\def\rvs{{S}}
\def\rvx{{X}}
\def\rvy{{Y}}
\def\rvz{{Z}}
\def\vzero{{0}}
\def\vtheta{{\theta}}
\def\va{{a}}
\def\vf{{f}}
\def\vw{{w}}
\def\vx{{x}}
\def\vy{{y}}
\def\vz{{z}}
\def\mI{{\bm{I}}}
\DeclareMathAlphabet{\mathsfit}{\encodingdefault}{\sfdefault}{m}{sl}
\SetMathAlphabet{\mathsfit}{bold}{\encodingdefault}{\sfdefault}{bx}{n}
\newcommand{\E}{\mathbb{E}}
\newcommand{\R}{\mathbb{R}}
\definecolor{cvprblue}{rgb}{0.21,0.49,0.74}
\title{Proxies for Distortion and Consistency with \\ Applications for Real-World Image Restoration}
\author{Sean Man \qquad Guy Ohayon \qquad Ron Raphaeli \qquad Michael Elad \\
Technion -- Israel Institute of Technology\\
\texttt{\{sean.man,ohayonguy,ronraphaeli,elad\}@cs.technion.ac.il}
}
\begin{document}

\crefname{algocf}{alg.}{algs.}
\Crefname{algocf}{Algorithm}{Algorithms}

\maketitle

\begin{abstract}

Real-world image restoration deals with the recovery of images suffering from an unknown degradation.
This task is typically addressed while being given only degraded images, without their corresponding ground-truth versions.
In this hard setting, designing and evaluating restoration algorithms becomes highly challenging.
This paper offers a suite of tools that can serve both the design and assessment of real-world image restoration algorithms.
Our work starts by proposing a trained model that predicts the chain of degradations a given real-world measured input has gone through. 
We show how this estimator can be used to approximate the consistency -- the match between the measurements and any proposed recovered image.
We also use this estimator as a guiding force for the design of a simple and highly-effective plug-and-play real-world image restoration algorithm, leveraging a pre-trained diffusion-based image prior.
Furthermore, this work proposes no-reference proxy measures of MSE and LPIPS, which, without access to the ground-truth images, allow ranking of real-world image restoration algorithms according to their (approximate) MSE and LPIPS.
The proposed suite provides a versatile, first of its kind framework for evaluating and comparing blind image restoration algorithms in real-world scenarios.

\end{abstract}

\begin{figure}[tb]
    \centering
    \includegraphics[width=\linewidth]{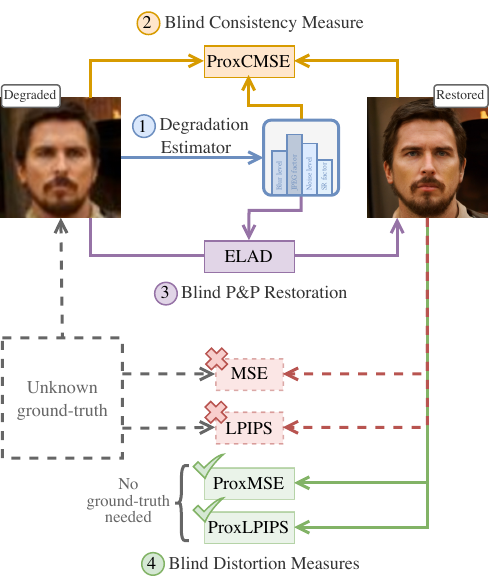}
    \caption{
    This work introduces several novel tools to help tackle the challenging task of real-world image restoration. We propose \circlelabel{fig1blue}{fig1bluedark}{1} an estimator that predicts the degradations a real-world corrupted measurement has gone through.
    Using this estimator, we \circlelabel{fig1orange}{fig1orangedark}{2} approximate the consistency of any reconstructed candidate with a given input measurement, and use such a measure to develop \circlelabel{fig1purple}{fig1purpledark}{3} a plug-and-play real-world restoration algorithm.
    Moreover, we propose \circlelabel{fig1green}{fig1greendark}{4} blind (no-reference) measures of distortion that mimic MSE and LPIPS, thus eliminating the need for ground-truth images when comparing the distortion of real-world image restoration algorithms.
    }
    \label{fig:hero}
\end{figure}

\section{Introduction}
\label{sec:intro}

Image restoration -- the task of reconstructing high-quality images from their degraded measurements (\eg, noisy, blurry) -- is one of the most extensively studied problems in imaging sciences.
The typical foundation for designing and testing image restoration algorithms is the assumption that the degradation process (forward model) is known and follows a specific mathematical form.
In image denoising, for example, the noise contamination is commonly assumed to be additive, white, and Gaussian with a known variance \cite{GaussianDenoiserResidual2017zhang,ImageDenoisingDeep2023elada,HighPerceptualQuality2021ohayona,StochasticImageDenoising2021kawarb,RestormerEfficientTransformer2022zamir,SwinIRImageRestoration2021liangb,ImageDenoisingSparse2007dabova,WeightedNuclearNorm2014gu,ResidualDenseUNet2021gurrola-ramos,ImageDenoisingSparse2006elad,TrainableNonlinearReaction2017chen}.
This enables synthesizing corresponding pairs of clean images and degraded measurements, which can then be used for supervised learning (\eg, minimizing the MSE~\cite{MeanSquaredError2009wang}) and/or evaluation (\eg, computing distortion measures such as PSNR, SSIM~\cite{ImageQualityAssessment2004wang}, and LPIPS~\cite{UnreasonableEffectivenessDeep2018zhanga}).
The mathematical expression of the degradation process can also be leveraged to design plug-and-play image restoration algorithms \cite{DenoisingDiffusionRestoration2022kawara,DiffusionPosteriorSampling2022chunga,VariationalPerspectiveSolving2023mardani,DenoisingDiffusionModels2023zhua,StochasticImageDenoising2021kawarb,LittleEngineThat2017romanoa,PlugPlayPriorsModel2013venkatakrishnan,ZeroShotImageRestoration2022wangb}.
Indeed, the gradient of the log-likelihood function $\log{p_{\rvy|\rvx}}(\vy|\vx)$ w.r.t. $x$, where $X$ and $Y$ are random vectors denoting the ground-truth image and the degraded measurement, respectively, is often used for guiding the sampling from the posterior distribution \cite{DenoisingDiffusionRestoration2022kawara,DiffusionPosteriorSampling2022chunga,VariationalPerspectiveSolving2023mardani,DenoisingDiffusionModels2023zhua,StochasticImageDenoising2021kawarb,ZeroShotImageRestoration2022wangb}.

As opposed to the above,  the degradation process is typically unknown  in real-world scenarios \cite{Ji_2020_CVPR_Workshops,MetricLearningBased2022mou,RealESRGANTrainingRealWorld2021wangb,DesigningPracticalDegradation2021zhang}, making it significantly more challenging to design and evaluate image restoration algorithms for everyday applications. This paper offers several focused contributions that aim to foster progress in such real-world scenarios, circumventing the above described difficulties. We start by introducing a highly-effective learnable model that predicts the chain of degradations a given real-world input has gone through~(\Cref{fig:hero}). As we show, this estimator can then be leveraged in three important ways: \emph{(i)} We rely on it for the development of a \emph{blind} consistency measure -- an approximation of the likelihood of any reconstructed candidate image given a real-world degraded input;
\emph{(ii)} Using this approximate-likelihood measure, we considerably improve the distortion and consistency of DifFace~\cite{DifFaceBlindFace2023yue}, by guiding the generated images to remain consistent with the inputs in a plug-and-play manner;
\emph{(iii)} Finally, we use the degradation estimator for synthesizing pairs of clean images and their corresponding degraded measurements, in a way that mimics any existing dataset of real-world inputs. %

Beyond all the above, we also propose \emph{blind} proxy measures of MSE and LPIPS, as illustrated in~\Cref{fig:hero}.
With no access to the ground-truth images of the given real-world degraded inputs, these measures can be used to rank real-world image restoration algorithms according to their (approximate) MSE and LPIPS.

This paper is organized as follows:
We begin with the problem's formulation and notations in \Cref{sec:background}.
Our degradation estimator and likelihood approximation approach are described in~\Cref{sec:ela}, along with their derivative applications.
In~\Cref{sec:metric} we introduce our blind (no-reference) proxy distortion measures, and use them to evaluate real-world restoration methods on blind face restoration datasets.
We conclude with a discussion on related work in~\Cref{sec:related-work}, and the limitations and possible future directions in \Cref{sec:conclusion}.

\section{Preliminaries}
\label{sec:background}

As in~\citep{PerceptionDistortionTradeoff2018blaua}, we consider a natural image $\vx$ to be a realization of a random vector $\rvx$ with probability density function $p_{\rvx}$.
A degraded measurement $\vy$ is also a realization of a random vector $\rvy$, which is related to $\rvx$ via the conditional probability density function $p_{\rvy|\rvx}$.
Generally speaking, an image restoration algorithm is some estimator $\smash{\hat{\rvx}}$ that generates reconstructions according to $p_{\hat{\rvx}|\rvy}$, where $\smash{\rvx\rightarrow\rvy\rightarrow\hat{\rvx}}$ ($\rvx$ and $\smash{\hat{\rvx}}$ are statistically independent given $\rvy$).

The degradation process is typically unknown in real-world scenarios.
Namely, a given real-world  measurement can result from many different variations over the possible degradation processes.
We denote by $A$ the random vector that represents all the possible degradation processes that $y$ could have gone through, and by $p_{A}$ its density.
Thus, we can generally say that the degraded measurement $y$ is sampled from the conditional density $p_{Y|X,A}(\cdot|x,a)$, where $x$ and $a$ are realizations of the random vectors $X$ and $A$, respectively, which are sampled jointly and independently.
In blind face image restoration (BFR), the degradation process is commonly modeled~\cite{RealWorldBlindFace2021wang,RobustBlindFace2022zhou,DifFaceBlindFace2023yue,DiffBIRBlindImage2024lin,VQFRBlindFace2022gu} as
\begin{align}
\label{eq:bfr_model}
    Y=\text{JPEG}_{\rvq} \lft( \lft( \rvk * \vx \rgt) {\downarrow_{\rvs}} + \rvn \rgt),
\end{align}
where $A=(\rvk,\rvs,\sigma_{\rvn},\rvq)^{\top}$ is a random vector: $\rvk$ is a Gaussian blur kernel of width $\sigma_{\rvk}$, $\downarrow_{\rvs}$ denotes the bilinear down-sampling operator with scale-factor $\rvs$, $\rvn\sim\mathcal{N}(0,\sigma_{\rvn}^2 I)$ is a white Gaussian noise with standard deviation $\sigma_{\rvn}$, and $\text{JPEG}_{\rvq}$ is JPEG compression-decompression algorithm with quality-factor $\rvq$.
Throughout the paper, we consider the BFR task for demonstrating our proposed tools, where  $\sigma_{\rvk}, \rvs, \sigma_{\rvn}$ and $\rvq$ in~\cref{eq:bfr_model} are sampled independently and uniformly from $[0.1,15], [1, 32], [0, 20/255], [30,100]$, respectively (as in DifFace~\citep{DifFaceBlindFace2023yue}), unless mentioned otherwise.

\section{Empirical likelihood approximation (ELA)}
\label{sec:ela}

The conditional probability density function $p_{\rvy|\rvx}$ leads to the definition of the \emph{log-likelihood function} $\ell(\vy,\vx) = \log p_{\rvy|\rvx}(\vy|\vx)$, where a larger value of $\ell(\vy,\vx)$ implies that the image $\vx$ is more \emph{consistent} with the measurement $\vy$.
Such a likelihood function has several practical benefits in image restoration.
For instance, it facilitates plug-and-play image restoration, where the outputs generated by a pre-trained diffusion model are enforced to be consistent with the given measurement~\cite{DenoisingDiffusionRestoration2022kawara,DiffusionPosteriorSampling2022chunga,VariationalPerspectiveSolving2023mardani,DenoisingDiffusionModels2023zhua,StochasticImageDenoising2021kawarb,LittleEngineThat2017romanoa,PlugPlayPriorsModel2013venkatakrishnan,ZeroShotImageRestoration2022wangb}.
However, $p_{Y|X}$ is typically unknown in real-world scenarios, so $\ell(\vy,\vx)$ cannot be used directly.
To overcome such a limitation, we introduce \emph{Empirical Likelihood Approximation} (ELA), an approach to approximate $\ell(\vy,\vx)$ based on a novel degradation estimator.
Utilizing the proposed likelihood function, we present a blind consistency measure for real-world problems, followed by a plug-and-play restoration method for blind face restoration that harnesses a pre-trained diffusion prior.

\subsection{Approximate log-likelihood}

Many of the algorithms that operate in non-blind settings (where $p_{Y|X}$ is known) focus on deterministic degradation operators with additive Gaussian noise.
This leads to the familiar log-likelihood expression
\begin{align}
\label{eq:gauss_likelihood}
    \ell(\vy,\vx) \propto -\norm{\vy - h(\vx)}_2^2,
\end{align}
where $h$ is some deterministic degradation operator (\eg, bi-cubic down-sampling).
Suppose that $Y=y$ is coupled with an appropriate $A=a$, namely $y$ is the degraded measurement that resulted from the degradation corresponding to $a$.
We approximate the log-likelihood function corresponding to such a degradation by
\begin{align}
    \ell(\vy,\vx) \approx -\norm{y-\mu_{\rvy}(\vx,a)}_{2}^{2},~\label{eq:nll-approx}
\end{align}
where $\mu_{Y}(\vx,a)\coloneqq \mathbb{E}[Y|X=\vx,A=a]$,
and the expectation is taken w.r.t. the remaining stochastic portions of the degradation $A=a$ (such as noise).
To design an appropriate log-likelihood function for an unknown degradation, 
we train a model to predict $A$ from $Y$, and then use the result in~\cref{eq:nll-approx}. Namely, 
\begin{align}
    \ell(\vy,\vx) \approx -\norm{y-\mu_{\rvy}(x,a_{\theta}(y))}_{2}^{2}\label{eq:nll-final-approx}
\end{align}
is our final approximation of the log-likelihood, where $a_{\theta}$ is a trained model which estimates $A$ from $Y$.

\subsection{Degradation estimator}
\label{sec:deg_est}

Given a measurement $\vy$, the task of the estimator $\va_\vtheta(\vy)$ is to predict the degradation that produced $\vy$, \ie $\va_\vtheta(\vy) \approx \va$.
For the task of blind face restoration, we train a regression model to estimate the degradation parameters described in \cref{eq:bfr_model}.
To train our model, we incorporate the standard squared error regression loss $\mathcal{L}_{\text{Main}}=\norm{a-a_{\theta}(y)}_2^2$.
Moreover, to ensure that the predicted solution aligns with $\mu_{Y}(x,a)$, we also add the regularization term $\mathcal{L}_{\text{Reg.}}=\norm{\mu_{\rvy}(\vx, \va) - \mu_{\rvy}(\vx, \va_\theta(\vy))}_2^2$.
Our final training loss is given by $0.25\smash{\mathcal{L}_{\text{Main}}+\mathcal{L}_{\text{Reg.}}}$.
\Cref{fig:deg_est_acc} demonstrates the high accuracy of the trained estimator. We defer the architecture and optimization details to \Cref{sec:imp_supp}.

While in this paper we focus on the degradation estimator's benefits for likelihood approximation, one may leverage this estimator for  additional applications.
As an example, we estimate the degradations' parameters in the real-world datasets LFW-Test~\cite{LabeledFacesWild2007huang}, WebPhoto-Test~\cite{RealWorldBlindFace2021wang}, and WIDER-Test~\cite{WIDERFACEFace2016yang,RobustBlindFace2022zhou} and approximate their distribution using Kernel Density Estimation (KDE).
Then, we synthesize degraded measurements from CelebA-Test~\cite{liu2015faceattributes,ProgressiveGrowingGANs2018karrasa,RobustBlindFace2022zhou}, a dataset of clean images, by sampling $A$ according to the predicted distribution corresponding to each real-world dataset.
The resulting synthetic datasets mimic the real-world ones, and therefore enable a more appropriate evaluation of real-world image restoration algorithms. This stands in contrast with prior works that sample the parameters of $A$ uniformly~\cite{RealWorldBlindFace2021wang,RobustBlindFace2022zhou,DifFaceBlindFace2023yue,DiffBIRBlindImage2024lin,VQFRBlindFace2022gu,RestoreFormerRealWorldBlind2023wang}.
Hence, we utilize our synthetic datasets (as well as uniform sampling) to evaluate the proposed tools presented in the next sections.
In~\Cref{fig:deg_est_real_world} we plot the approximated distributions of the real-world degradations' parameters, and in~\Cref{sec:datasets_supp} we compare the results visually.

\begin{figure}[tb]
    \centering
    \includegraphics[width=\linewidth]{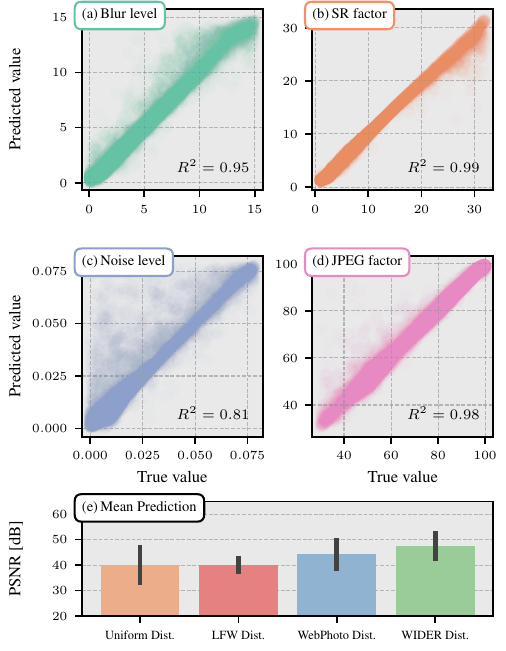}
    \caption{\textbf{Degradation estimator accuracy.}
    We test our degradation estimator on synthetic CelebA-Test datasets (\Cref{sec:deg_est}).
    (a-d) Scatter plots and $R^2$ scores of the true \vs the predicted values for each type of operator in \cref{eq:bfr_model}.
    (e) The mean and standard deviation of the PSNR between $\mu_{Y}(\vx, \va)$ and $\mu_{Y}(\vx, \va_\vtheta(\vy))$.
    The estimator demonstrates high prediction accuracy, as reflected by the high PSNR and $R^2$ scores.
    }
    \label{fig:deg_est_acc}
\end{figure}

\begin{figure}[tbh]
    \centering
    \includegraphics[width=\linewidth]{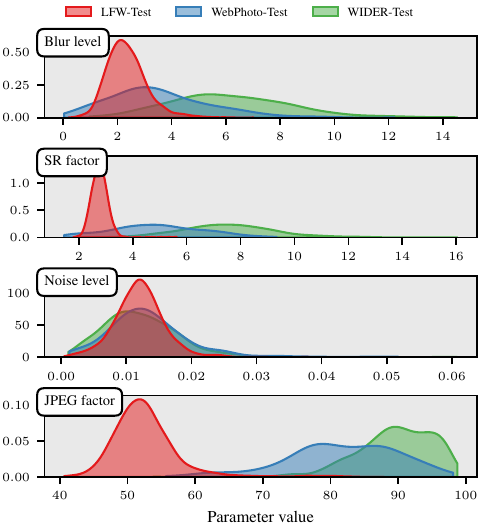}
    \caption{\textbf{Degradations in real-world BFR datasets.}
    Using our degradation estimator, we reveal the distribution of the degradations presented in real-world datasets.
    This information can be utilized to better analyze such datasets, as well as mimicking them.}
    \label{fig:deg_est_real_world}
\end{figure}

\subsection{Blind measure of consistency}
\label{sec:consistency}

An immediate application of \cref{eq:nll-approx} is a \emph{blind} consistency measure.
In the case where the true degradation process $\va$ is known, we define
\begin{align}
\label{eq:cmse}
    \text{CMSE}(\hat{\rvx})\coloneqq \E_{(\hat\vx, \vy) \sim p_{\hat\rvx,\rvy}} \lft[ \norm{\vy - \mu_{Y}(\hat{\vx}, \va)}_2^2 \rgt].
\end{align}
This is a direct generalization of the commonly used consistency measure, as practiced in~\cite{SRFlowLearningSuperResolution2020lugmayr,NTIRE2021Learning2021lugmayra,NTIRE2022Challenge2022lugmayr,ReasonsSuperiorityStochastic2023ohayon,HighPerceptualQualityJPEG2023man,ExplorableSuperResolution2020bahata,StochasticImageDenoising2021kawarb,HighPerceptualQuality2021ohayona}. 
In real-world scenarios where $a$ is unknown, we use the approximation of $\ell(\vy,\vx)$ in~\cref{eq:nll-final-approx} and define
\begin{eqnarray}
    \text{ProxCMSE}(\hat{\rvx})\coloneqq  \E_{(\hat\vx, \vy) \sim p_{\hat\rvx,\rvy}} \lft[ \norm{\vy - \mu_{Y}(\hat{\vx}, \va_\theta(\vy))}_2^2 \rgt].
\end{eqnarray}
Namely, if $\va_\theta(y)$ is trained appropriately, $\text{ProxCMSE}(\hat{\rvx})$ would be an approximation of $\text{CMSE}(\hat{\rvx})$.
In \Cref{fig:bfr_crmse} we show that ProxCMSE aligns well with the true CMSE on the datasets we synthesized in \Cref{sec:deg_est}. In \Cref{tab:elad,tab:bfr_rmse_real} we use ProxCMSE as another measure to evaluate algorithms on real-world datasets. %

\begin{figure}[tb]
    \centering
    \includegraphics[width=\linewidth]{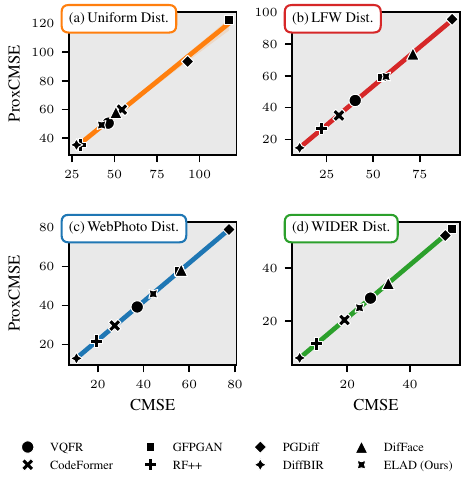}
    \caption{\textbf{Proxy consistency measure.}
    Each plot shows the CMSE versus ProxCMSE, evaluated on synthetic CelebA-Test datasets (\Cref{sec:deg_est}).
    The strong alignment of the two suggests that ProxCMSE is a trustworthy approximation for the CMSE when the degradation process is unknown.
    }
    \label{fig:bfr_crmse}
\end{figure}

\subsection{Plug-and-play real-world image restoration}
\label{sec:elad}

Another application of our degradation estimator and the derived likelihood approximation is the ability to construct plug-and-play blind restoration algorithms that harness pre-trained generative image priors, such as diffusion models.
To achieve valid and perceptually pleasing restorations in blind face restoration, prior plug-and-play diffusion methods use different types of heuristics.
PGDiff~\cite{PGDiffGuidingDiffusion2023yang} uses the guidance term $\nabla_{\hat\vx}\norm{\hat\vx - f(\vy)}$ at each step, where $f(\vy)$ is an MMSE predictor of $\rvx$ from $\rvy=\vy$.
DifFace~\cite{DifFaceBlindFace2023yue} starts from an intermediate diffusion timestep, initialized by $f(\vy)$ with no guidance term.
As seen in \Cref{fig:elad_consistency}, these heuristics tend to produce inconsistent outputs.

Here, we present a new approach: \emph{Empirical Likelihood Approximation with Diffusion prior} (ELAD) (\Cref{alg:elad_short}).
Our algorithm extends DifFace by guiding the diffusion process to take steps in the direction of the approximated log-likelihood score.
Thus, our approach is similar to DPS~\cite{DiffusionPosteriorSampling2022chunga}, where we use our approximation of the likelihood (\cref{eq:nll-final-approx}) instead of the true one, $\log{p_{Y|X}(y|x)}$ (which is unavailable in our setting). As seen in \Cref{fig:elad_consistency}, 
using such guidance promotes consistency with the measurement at each step without sacrificing image quality. 
We perform $100$ denoising steps like DifFace, starting from an intermediate timestep for accelerated sampling.
While we adapt DifFace's diffusion prior, note that our method is not limited only for such a prior.
Namely, the results could be further improved by using better priors.
See \Cref{sec:imp_supp} for more implementation details.

We defer the quantitative evaluation of ELAD on synthetic and real-world datasets to \Cref{sec:metric-exp}, immediately after we introduce our new proxy distortion measures. A qualitative comparison between ELAD and leading end-to-end methods on real-world datasets is presented in \Cref{fig:images}.

\begingroup
\newcolumntype{M}[1]{>{\centering\arraybackslash}m{#1}}
\newcommand{\vcentered}[1]{\begin{tabular}{@{}l@{}} #1 \end{tabular}}
\setlength{\tabcolsep}{0pt} %
\renewcommand{\arraystretch}{0} %

\newcommand{\centered}[1]{\begin{tabular}{l} #1 \end{tabular}}

\newcommand{\addimgcol}[8][1]{
	\centered{
	   	\begin{tikzpicture}[
	   		baseline=-2.45,
	   		spy using outlines={magnification=#3, circle, height=#8, width=#8, yellow, every spy on node/.append style={thick}, connect spies},
	   		]
			\node[inner sep=0pt]{\scalebox{#1}[1]{\adjincludegraphics[width=\hero, trim={#4}, clip]{images/ours/#7-bsds/compressed/#2}}};
			\spy on (#5) in node at (#6);
		\end{tikzpicture}}&
		
		\centered{
		\begin{tikzpicture}[
	   		baseline=-2.45,
	   		spy using outlines={magnification=#3, circle, height=#8, width=#8, yellow, every spy on node/.append style={thick}, connect spies},
	   		]
			\node[inner sep=0pt]{\scalebox{#1}[1]{\adjincludegraphics[width=\hero, trim={#4}, clip]{images/qgac/#7-bsds/#2}}};
			\spy on (#5) in node at (#6);
		\end{tikzpicture}}&
		
		\centered{
		\begin{tikzpicture}[
	   		baseline=-2.45,
	   		spy using outlines={magnification=#3, circle, height=#8, width=#8, yellow, every spy on node/.append style={thick}, connect spies},
	   		]
			\node[inner sep=0pt]{\scalebox{#1}[1]{\adjincludegraphics[width=\hero, trim={#4}, clip]{images/qgac-gan/#7-bsds/#2}}};
			\spy on (#5) in node at (#6);
		\end{tikzpicture}}&
		
		\centered{
	   	\begin{tikzpicture}[
	   		baseline=-2.45,
	   		spy using outlines={magnification=#3, circle, height=#8, width=#8, yellow, every spy on node/.append style={thick}, connect spies},
	   		]
			\node[inner sep=0pt]{\scalebox{#1}[1]{\adjincludegraphics[width=\hero, trim={#4}, clip]{images/bahat/#7-bsds/fake_0/#2}}};
			\spy on (#5) in node at (#6);
		\end{tikzpicture}}&

		\centered{
		\begin{tikzpicture}[
	   		baseline=-2.45,
	   		spy using outlines={magnification=#3, circle, height=#8, width=#8, yellow, every spy on node/.append style={thick}, connect spies},
	   		]
			\node[inner sep=0pt]{\scalebox{#1}[1]{\adjincludegraphics[width=\hero, trim={#4}, clip]{images/ours-p/#7-bsds/fake_0/#2}}};
			\spy on (#5) in node at (#6);
		\end{tikzpicture}}&
		
		\centered{
		\begin{tikzpicture}[
	   		baseline=-2.45,
	   		spy using outlines={magnification=#3, circle, height=#8, width=#8, yellow, every spy on node/.append style={thick}, connect spies},
	   		]
			\node[inner sep=0pt]{\scalebox{#1}[1]{\adjincludegraphics[width=\hero, trim={#4}, clip]{images/ours/#7-bsds/real/#2}}};
			\spy on (#5) in node at (#6);
		\end{tikzpicture}}
		
	    \\
}

\begin{figure*}[tb]
    \centering
    \begin{tabular}{M{0.2\linewidth} M{0.2\linewidth} M{0.2\linewidth} M{0.2\linewidth} M{0.2\linewidth}}
    
        \footnotesize{Degraded} &
        \footnotesize{Ground-Truth} &
        \footnotesize{ELAD (Ours)} &
        \footnotesize{DifFace~\cite{DifFaceBlindFace2023yue}} &
        \footnotesize{PGDiff~\cite{PGDiffGuidingDiffusion2023yang}}
        \\
	    
        \rule{0pt}{0.8ex}\\

        \multicolumn{5}{c}{\centered{\includegraphics[width=\linewidth]{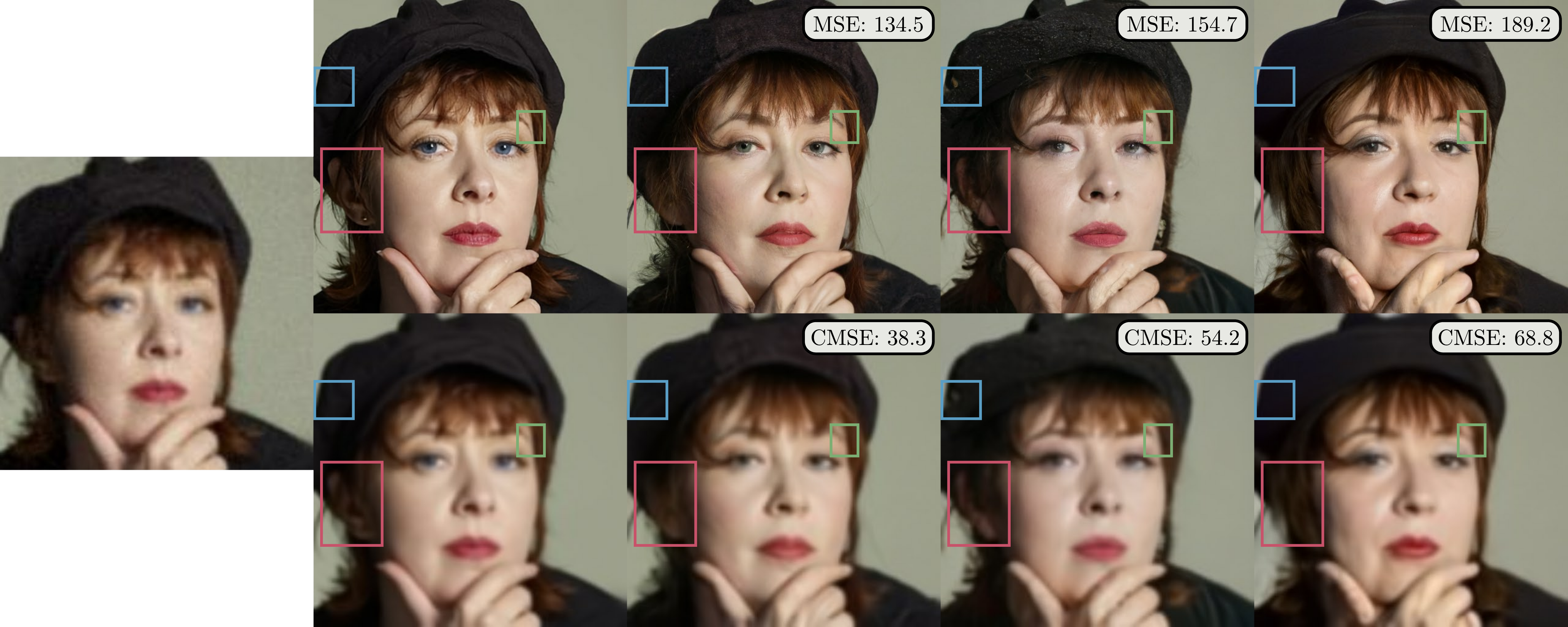}}} \\
        
        \rule{0pt}{0.8ex}\\

        &
        \footnotesize{$\mu_{Y}(\vx,a)$} &
        \footnotesize{$\mu_{Y}(\hat\vx_{\text{ELAD}},a)$} &
        \footnotesize{$\mu_{Y}(\hat\vx_{\text{DifFace}},a)$} &
        \footnotesize{$\mu_{Y}(\hat\vx_{\text{PGDiff}},a)$}
        
    \end{tabular}
    \caption{\textbf{Consistency of P\&P real-world restoration methods.}
    Left column: A synthetic degraded example from CelebA-Test.
    First row: The ground-truth image alongside the restorations of ELAD (our method), DifFace, and PGDiff.
    Second row: The mean of the likelihood defined using the true degradation process $\va$ and the corresponding image from the row above.
    For reference, we also report the MSE and CMSE of each method (\cref{eq:cmse}).
    As shown, ELAD attains better consistency with the measurement.
    PGDiff, for example, does not restore the ear (\rectlabel{fig1red}{red} rectangle), which is visible both in the degraded image and in $\mu_{Y}(\vx,a)$.
    Similarly, DifFace erases part of the brow (\rectlabel{fig1green}{green} rectangle) and creates inconsistent artifacts in the hat (\rectlabel{fig1blue}{blue} rectangle).
    In contrast, ELAD performs better by guiding the diffusion process to produce consistent reconstructions.
    As a positive side effect, achieving better consistency also leads to better distortion, which is apparent visually in the top row, and can be confirmed by the reported MSE and by~\Cref{tab:elad}.
    }
    \label{fig:elad_consistency}
\end{figure*}
\endgroup

\begingroup

\RestyleAlgo{ruled}

\SetKwComment{Comment}{// }{}
\newcommand\mycommfont[1]{\footnotesize\ttfamily\textcolor{cvprblue}{#1}}
\SetCommentSty{mycommfont}

\begin{algorithm}[hbt!]
\DontPrintSemicolon
\LinesNumbered
\caption{ELAD - Blind Restoration Diffusion Sampler (a full version is given in \cref{sec:imp_supp})}
\label{alg:elad_short}
\KwIn{measurement $\vy$, degradation estimator $\va_\theta$, MMSE regressor $f$, start time $T_0 \leq T$, step~sizes $\{\lambda_t\}_{t=1}^{T_0}$}
\KwOut{a restored image $\vx_0$}
$\vx_{T_0} = \text{AddNoise}(f(\vy), T_0)$\;
$\hat\va = \va_\theta(\vy)$ \Comment{predict degradation}
\For{$t=T_0$ \KwTo $1$}{
    $\hat\vx_0^t=\text{DenoiseStep}(\vx_t)$ \;
    \Comment{compute score likelihood}
    $g = \nabla_{\hat\vx_t} \norm{\vy - \mu(\hat\vx_0^t, \hat\va)}_2^2$ \;
    \Comment{perform likelihood step}
    $\hat\vx_0^t = \hat\vx_0^t - \lambda_t \cdot g.\text{clamp}(-1,1) $ \;
    $\vx_{t-1}=\text{DDIMStep}(\vx_t, \hat\vx_0^t)$ \;
}
\end{algorithm}

\endgroup

\begingroup
\newcolumntype{M}[1]{>{\centering\arraybackslash}m{#1}}
\newcommand{\vcentered}[1]{\begin{tabular}{@{}l@{}} #1 \end{tabular}}
\setlength{\tabcolsep}{0pt} %
\renewcommand{\arraystretch}{0} %

\def\columns{12}
\def\totalwidth{1.0}

\FPeval{\colwidth}{clip(\totalwidth/\columns)}
\FPeval{\doublecolwidth}{clip(2*\totalwidth/\columns)}
\FPeval{\imgwidth}{\totalwidth/\columns *\columns}

\newcommand{\centered}[1]{\begin{tabular}{l} #1 \end{tabular}}

\newcommand{\addimgcol}[8][1]{
	\centered{
	   	\begin{tikzpicture}[
	   		baseline=-2.45,
	   		spy using outlines={magnification=#3, circle, height=#8, width=#8, yellow, every spy on node/.append style={thick}, connect spies},
	   		]
			\node[inner sep=0pt]{\scalebox{#1}[1]{\adjincludegraphics[width=\hero, trim={#4}, clip]{images/ours/#7-bsds/compressed/#2}}};
			\spy on (#5) in node at (#6);
		\end{tikzpicture}}&
		
		\centered{
		\begin{tikzpicture}[
	   		baseline=-2.45,
	   		spy using outlines={magnification=#3, circle, height=#8, width=#8, yellow, every spy on node/.append style={thick}, connect spies},
	   		]
			\node[inner sep=0pt]{\scalebox{#1}[1]{\adjincludegraphics[width=\hero, trim={#4}, clip]{images/qgac/#7-bsds/#2}}};
			\spy on (#5) in node at (#6);
		\end{tikzpicture}}&
		
		\centered{
		\begin{tikzpicture}[
	   		baseline=-2.45,
	   		spy using outlines={magnification=#3, circle, height=#8, width=#8, yellow, every spy on node/.append style={thick}, connect spies},
	   		]
			\node[inner sep=0pt]{\scalebox{#1}[1]{\adjincludegraphics[width=\hero, trim={#4}, clip]{images/qgac-gan/#7-bsds/#2}}};
			\spy on (#5) in node at (#6);
		\end{tikzpicture}}&
		
		\centered{
	   	\begin{tikzpicture}[
	   		baseline=-2.45,
	   		spy using outlines={magnification=#3, circle, height=#8, width=#8, yellow, every spy on node/.append style={thick}, connect spies},
	   		]
			\node[inner sep=0pt]{\scalebox{#1}[1]{\adjincludegraphics[width=\hero, trim={#4}, clip]{images/bahat/#7-bsds/fake_0/#2}}};
			\spy on (#5) in node at (#6);
		\end{tikzpicture}}&

		\centered{
		\begin{tikzpicture}[
	   		baseline=-2.45,
	   		spy using outlines={magnification=#3, circle, height=#8, width=#8, yellow, every spy on node/.append style={thick}, connect spies},
	   		]
			\node[inner sep=0pt]{\scalebox{#1}[1]{\adjincludegraphics[width=\hero, trim={#4}, clip]{images/ours-p/#7-bsds/fake_0/#2}}};
			\spy on (#5) in node at (#6);
		\end{tikzpicture}}&
		
		\centered{
		\begin{tikzpicture}[
	   		baseline=-2.45,
	   		spy using outlines={magnification=#3, circle, height=#8, width=#8, yellow, every spy on node/.append style={thick}, connect spies},
	   		]
			\node[inner sep=0pt]{\scalebox{#1}[1]{\adjincludegraphics[width=\hero, trim={#4}, clip]{images/ours/#7-bsds/real/#2}}};
			\spy on (#5) in node at (#6);
		\end{tikzpicture}}
		
	    \\
}

\begin{figure*}[tb]
    \centering
    \begin{tabular}{cc cc cc cc cc cc}
    
	    \multicolumn{2}{c}{\footnotesize{Degraded}} &

	    \multicolumn{2}{c}{\footnotesize{RF++~\cite{RestoreFormerRealWorldBlind2023wang}}} &
	    \multicolumn{2}{c}{\footnotesize{CodeFormer~\cite{RobustBlindFace2022zhou}}} &
	    \multicolumn{2}{c}{\footnotesize{DiffBIR~\cite{DiffBIRBlindImage2024lin}}} &
	    \multicolumn{2}{c}{\footnotesize{PMRF~\cite{PosteriorMeanRectifiedFlow2024ohayon}}} &

	    \multicolumn{2}{c}{\footnotesize{ELAD (Ours)}}
	    \\
	    
        \rule{0pt}{0.8ex}\\

        \multicolumn{2}{c}{\centered{\includegraphics[width=\doublecolwidth\linewidth]{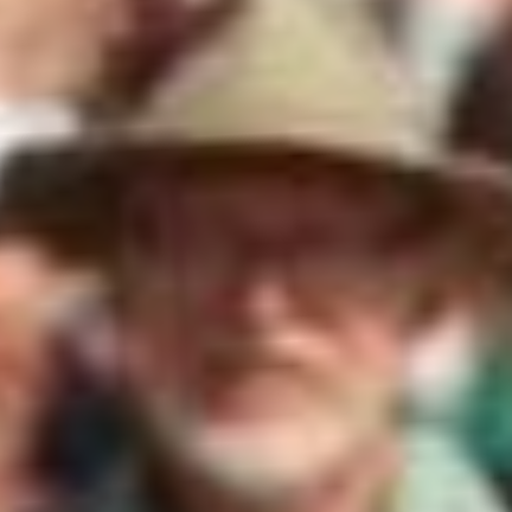}}} &
        \multicolumn{2}{c}{\centered{\includegraphics[width=\doublecolwidth\linewidth]{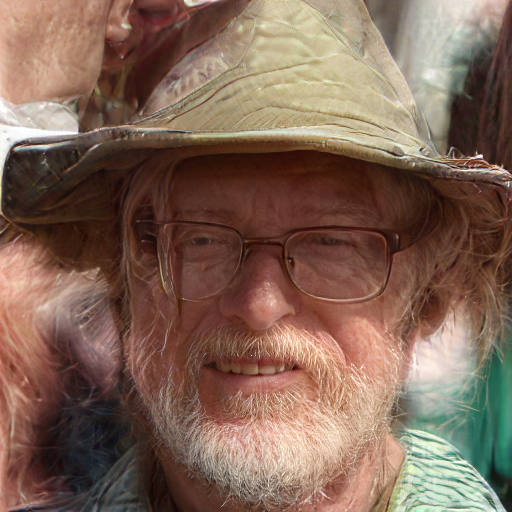}}} &
        \multicolumn{2}{c}{\centered{\includegraphics[width=\doublecolwidth\linewidth]{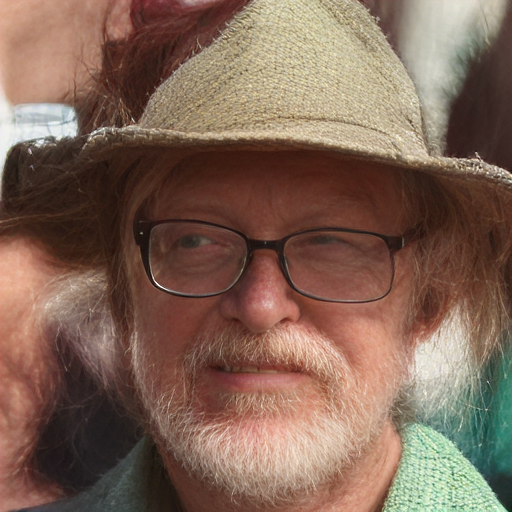}}} &
        \multicolumn{2}{c}{\centered{\includegraphics[width=\doublecolwidth\linewidth]{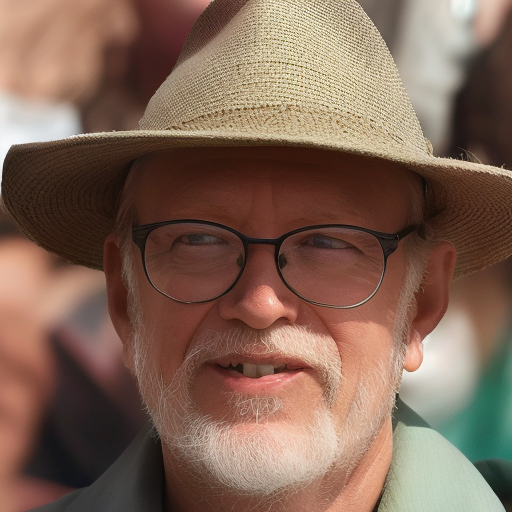}}} &
        \multicolumn{2}{c}{\centered{\includegraphics[width=\doublecolwidth\linewidth]{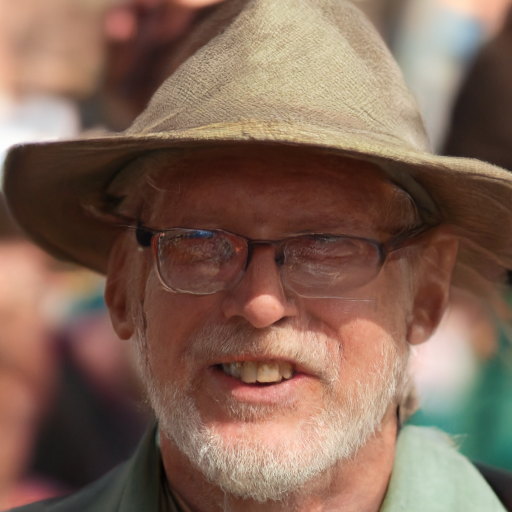}}} &
        \multicolumn{2}{c}{\centered{\includegraphics[width=\doublecolwidth\linewidth]{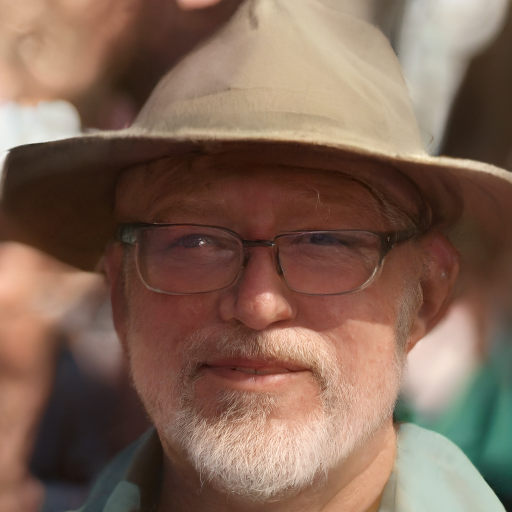}}}
	    \\
        \centered{\adjincludegraphics[width=\colwidth\linewidth, trim={{.2\width} {.375\height} {0.55\width} {.375\width}}, clip]{images/0011_wider_degraded.png}}&
        \centered{\adjincludegraphics[width=\colwidth\linewidth, trim={{.375\width} {.1\height} {0.375\width} {.65\height}}, clip]{images/0011_wider_degraded.png}}&
        \centered{\adjincludegraphics[width=\colwidth\linewidth, trim={{.2\width} {.375\height} {0.55\width} {.375\width}}, clip]{images/0011_wider_rf.png}}&
        \centered{\adjincludegraphics[width=\colwidth\linewidth, trim={{.375\width} {.1\height} {0.375\width} {.65\height}}, clip]{images/0011_wider_rf.png}}&
        \centered{\adjincludegraphics[width=\colwidth\linewidth, trim={{.2\width} {.375\height} {0.55\width} {.375\width}}, clip]{images/0011_wider_codeformer.png}}&
        \centered{\adjincludegraphics[width=\colwidth\linewidth, trim={{.375\width} {.1\height} {0.375\width} {.65\height}}, clip]{images/0011_wider_codeformer.png}}&
        \centered{\adjincludegraphics[width=\colwidth\linewidth, trim={{.2\width} {.375\height} {0.55\width} {.375\width}}, clip]{images/0011_wider_diffbir.png}}&
        \centered{\adjincludegraphics[width=\colwidth\linewidth, trim={{.375\width} {.1\height} {0.375\width} {.65\height}}, clip]{images/0011_wider_diffbir.png}}&
        \centered{\adjincludegraphics[width=\colwidth\linewidth, trim={{.2\width} {.375\height} {0.55\width} {.375\width}}, clip]{images/0011_wider_pmrf.png}}&
        \centered{\adjincludegraphics[width=\colwidth\linewidth, trim={{.375\width} {.1\height} {0.375\width} {.65\height}}, clip]{images/0011_wider_pmrf.png}}&
        \centered{\adjincludegraphics[width=\colwidth\linewidth, trim={{.2\width} {.375\height} {0.55\width} {.375\width}}, clip]{images/0011_wider_elad.png}}&
        \centered{\adjincludegraphics[width=\colwidth\linewidth, trim={{.375\width} {.1\height} {0.375\width} {.65\height}}, clip]{images/0011_wider_elad.png}}
        \\
        \multicolumn{2}{c}{\centered{\includegraphics[width=\doublecolwidth\linewidth]{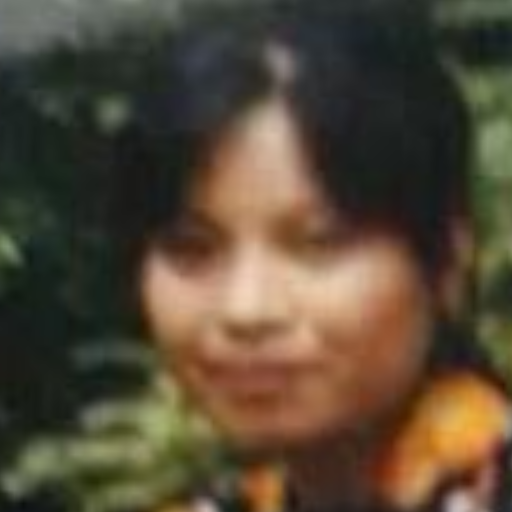}}} &
        \multicolumn{2}{c}{\centered{\includegraphics[width=\doublecolwidth\linewidth]{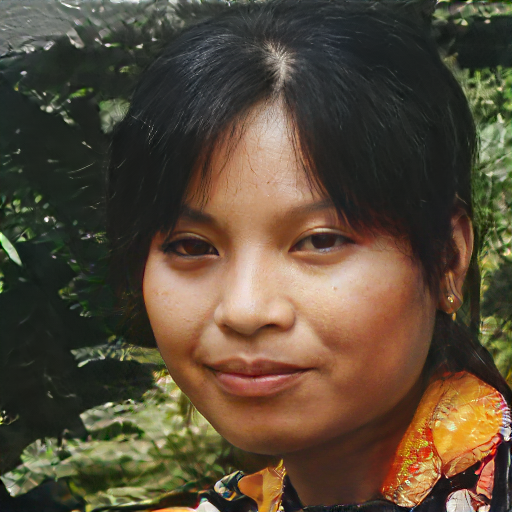}}} &
        \multicolumn{2}{c}{\centered{\includegraphics[width=\doublecolwidth\linewidth]{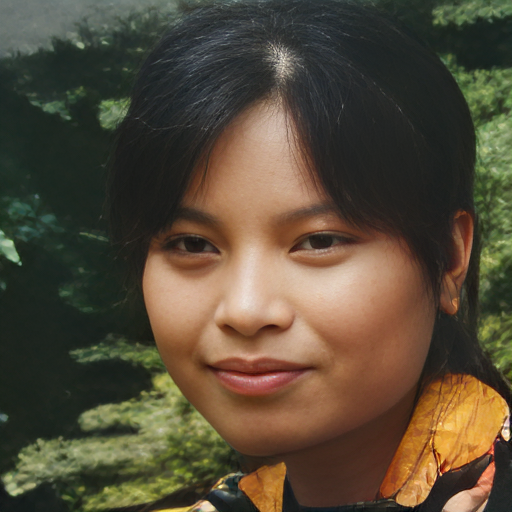}}} &
        \multicolumn{2}{c}{\centered{\includegraphics[width=\doublecolwidth\linewidth]{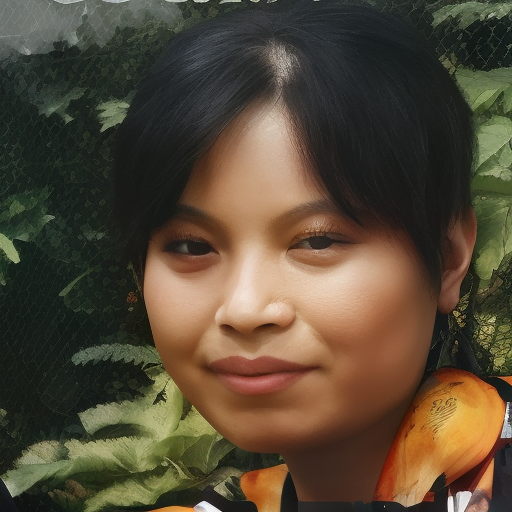}}} &
        \multicolumn{2}{c}{\centered{\includegraphics[width=\doublecolwidth\linewidth]{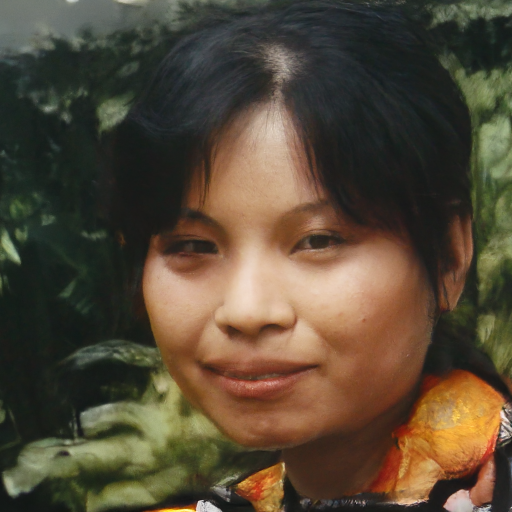}}} &
        \multicolumn{2}{c}{\centered{\includegraphics[width=\doublecolwidth\linewidth]{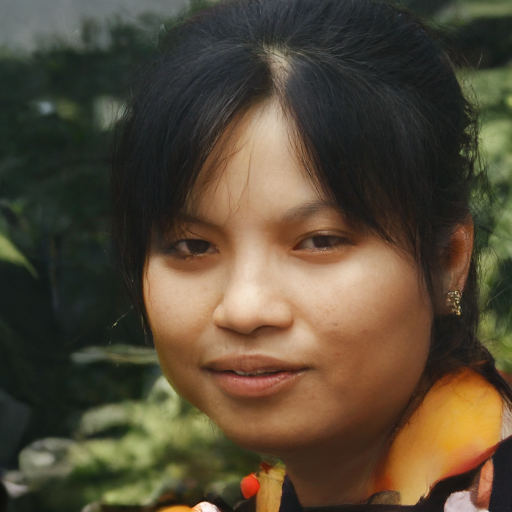}}}
        \\
        \centered{\adjincludegraphics[width=\colwidth\linewidth, trim={{0\width} {.75\height} {.75\width} {.0\width}}, clip]{images/00050_02_webphoto_degraded.png}}&
        \centered{\adjincludegraphics[width=\colwidth\linewidth, trim={{.225\width} {.375\height} {0.525\width} {.375\width}}, clip]{images/00050_02_webphoto_degraded.png}}&
        \centered{\adjincludegraphics[width=\colwidth\linewidth, trim={{0\width} {.75\height} {.75\width} {.0\width}}, clip]{images/00050_02_webphoto_rf.png}}&
        \centered{\adjincludegraphics[width=\colwidth\linewidth, trim={{.225\width} {.375\height} {0.525\width} {.375\width}}, clip]{images/00050_02_webphoto_rf.png}}&
        \centered{\adjincludegraphics[width=\colwidth\linewidth, trim={{0\width} {.75\height} {.75\width} {.0\width}}, clip]{images/00050_02_webphoto_codeformer.png}}&
        \centered{\adjincludegraphics[width=\colwidth\linewidth, trim={{.225\width} {.375\height} {0.525\width} {.375\width}}, clip]{images/00050_02_webphoto_codeformer.png}}&
        \centered{\adjincludegraphics[width=\colwidth\linewidth, trim={{0\width} {.75\height} {.75\width} {.0\width}}, clip]{images/00050_02_webphoto_diffbir.png}}&
        \centered{\adjincludegraphics[width=\colwidth\linewidth, trim={{.225\width} {.375\height} {0.525\width} {.375\width}}, clip]{images/00050_02_webphoto_diffbir.png}}&
        \centered{\adjincludegraphics[width=\colwidth\linewidth, trim={{0\width} {.75\height} {.75\width} {.0\width}}, clip]{images/00050_02_webphoto_pmrf.png}}&
        \centered{\adjincludegraphics[width=\colwidth\linewidth, trim={{.225\width} {.375\height} {0.525\width} {.375\width}}, clip]{images/00050_02_webphoto_pmrf.png}}&
        \centered{\adjincludegraphics[width=\colwidth\linewidth, trim={{0\width} {.75\height} {.75\width} {.0\width}}, clip]{images/00050_02_webphoto_elad.png}}&
        \centered{\adjincludegraphics[width=\colwidth\linewidth, trim={{.225\width} {.375\height} {0.525\width} {.375\width}}, clip]{images/00050_02_webphoto_elad.png}}
        \\
        \multicolumn{2}{c}{\centered{\includegraphics[width=\doublecolwidth\linewidth]{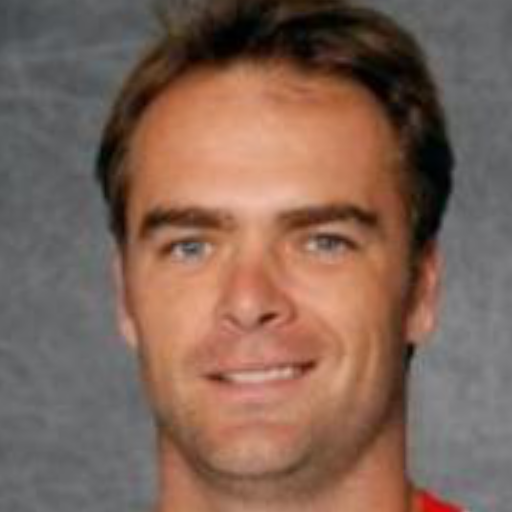}}} &
        \multicolumn{2}{c}{\centered{\includegraphics[width=\doublecolwidth\linewidth]{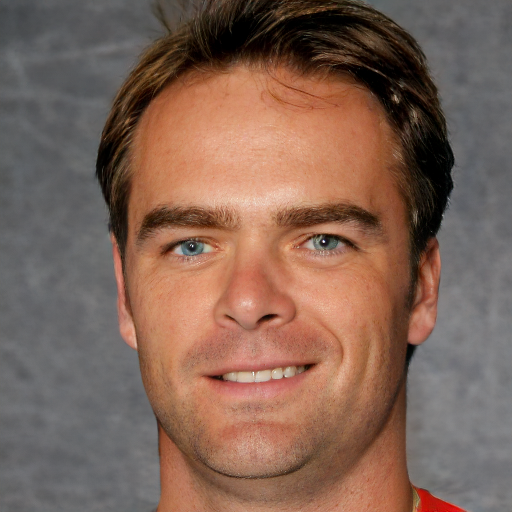}}} &
        \multicolumn{2}{c}{\centered{\includegraphics[width=\doublecolwidth\linewidth]{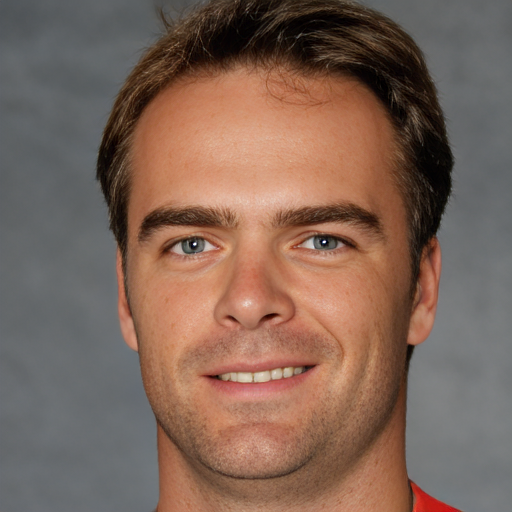}}} &
        \multicolumn{2}{c}{\centered{\includegraphics[width=\doublecolwidth\linewidth]{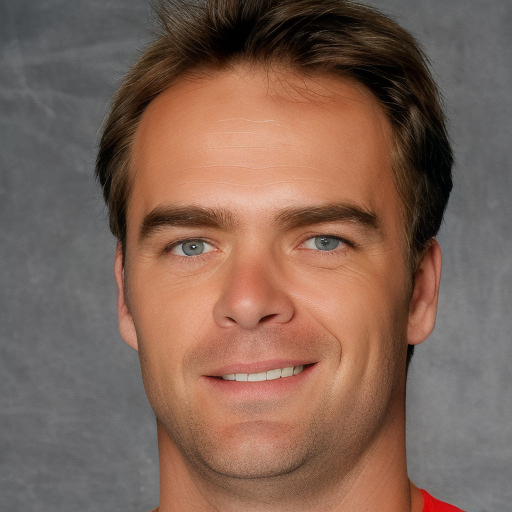}}} &
        \multicolumn{2}{c}{\centered{\includegraphics[width=\doublecolwidth\linewidth]{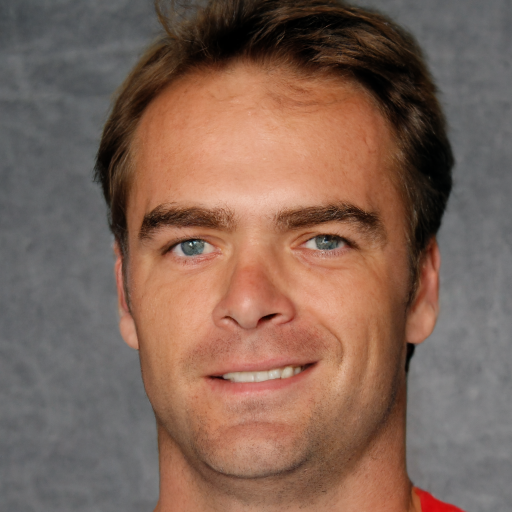}}} &
        \multicolumn{2}{c}{\centered{\includegraphics[width=\doublecolwidth\linewidth]{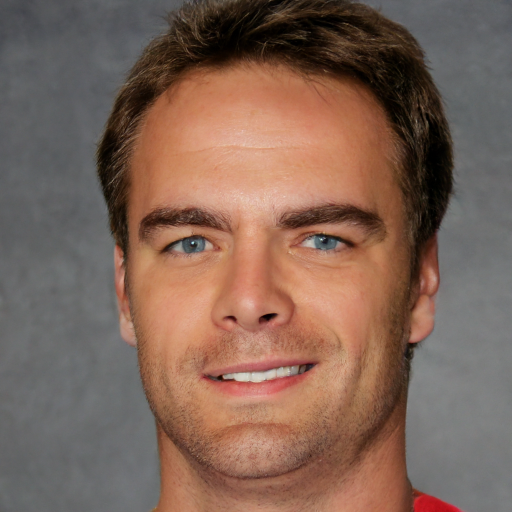}}}
        \\
        \centered{\adjincludegraphics[width=\colwidth\linewidth, trim={{.25\width} {.1\height} {.5\width} {.65\width}}, clip]{images/Curtis_Joseph_0001_00_lfw_degraded.png}}&
        \centered{\adjincludegraphics[width=\colwidth\linewidth, trim={{.25\width} {.4\height} {.5\width} {.35\width}}, clip]{images/Curtis_Joseph_0001_00_lfw_degraded.png}}&
        \centered{\adjincludegraphics[width=\colwidth\linewidth, trim={{.25\width} {.1\height} {.5\width} {.65\width}}, clip]{images/Curtis_Joseph_0001_00_lfw_rf.png}}&
        \centered{\adjincludegraphics[width=\colwidth\linewidth, trim={{.25\width} {.4\height} {.5\width} {.35\width}}, clip]{images/Curtis_Joseph_0001_00_lfw_rf.png}}&
        \centered{\adjincludegraphics[width=\colwidth\linewidth, trim={{.25\width} {.1\height} {.5\width} {.65\width}}, clip]{images/Curtis_Joseph_0001_00_lfw_codeformer.png}}&
        \centered{\adjincludegraphics[width=\colwidth\linewidth, trim={{.25\width} {.4\height} {.5\width} {.35\width}}, clip]{images/Curtis_Joseph_0001_00_lfw_codeformer.png}}&
        \centered{\adjincludegraphics[width=\colwidth\linewidth, trim={{.25\width} {.1\height} {.5\width} {.65\width}}, clip]{images/Curtis_Joseph_0001_00_lfw_diffbir.png}}&
        \centered{\adjincludegraphics[width=\colwidth\linewidth, trim={{.25\width} {.4\height} {.5\width} {.35\width}}, clip]{images/Curtis_Joseph_0001_00_lfw_diffbir.png}}&
        \centered{\adjincludegraphics[width=\colwidth\linewidth, trim={{.25\width} {.1\height} {.5\width} {.65\width}}, clip]{images/Curtis_Joseph_0001_00_lfw_pmrf.png}}&
        \centered{\adjincludegraphics[width=\colwidth\linewidth, trim={{.25\width} {.4\height} {.5\width} {.35\width}}, clip]{images/Curtis_Joseph_0001_00_lfw_pmrf.png}}&
        \centered{\adjincludegraphics[width=\colwidth\linewidth, trim={{.25\width} {.1\height} {.5\width} {.65\width}}, clip]{images/Curtis_Joseph_0001_00_lfw_elad.png}}&
        \centered{\adjincludegraphics[width=\colwidth\linewidth, trim={{.25\width} {.4\height} {.5\width} {.35\width}}, clip]{images/Curtis_Joseph_0001_00_lfw_elad.png}}
        \\
	    
    \end{tabular}
    \caption{Restoration examples of real-world images taken from WIDER-Test, WebPhoto-Test, and LFW-Test (top to bottom) for different methods. ELAD is highly competitive against current state-of-the-art end-to-end methods while being independent of the underlying prior.}
    \label{fig:images}
\end{figure*}
\endgroup

\section{Proxy distortion measures}
\label{sec:metric}

This section introduces the \emph{Proxy Mean Squared Error} (ProxMSE), a distortion measure that allows to practically \emph{rank} image restoration algorithms according to their MSE without access to the ground-truth images.
Using the same derivations, we present ProxLPIPS as a no-reference perceptual distortion measure.
We start by defining distortion measures in general and MSE and LPIPS in particular.
We continue by defining ProxMSE and ProxLPIPS, showing why they produce rankings that are faithful to the true MSE and LPIPS.
Then, we demonstrate that ProxMSE and ProxLPIPS are highly correlative to the true MSE and LPIPS by conducting controlled experiments on synthetic degradations.
Finally, we utilize ProxMSE and ProxLPIPS to rank several real-world face restoration algorithms.

\subsection{Background}

Non-blind image restoration algorithms are typically evaluated by full-reference distortion measures, which quantify the discrepancy between the reconstructed images and the ground-truth ones.
Formally, the average distortion of an estimator $\hat{\rvx}$ is defined by
\begin{align}
    \mathbb{E}_{(\vx, \hat\vx) \sim p_{\rvx, \hat\rvx}} [\Delta(\vx,\hat{\vx})],
\end{align}
where $\Delta(\vx,\hat{\vx})$ is some distortion measure (\eg, the squared error). %
Thus, measuring the average distortion in practice requires access to pairs of samples from $p_{\rvx,\hat{\rvx}}$.
The most common way to obtain such pairs is to degrade the given samples from $\smash{p_{\rvx}}$ according to $\smash{p_{\rvy|\rvx}}$ (\eg, add noise), and then reconstruct the results using $\smash{\hat{\rvx}}$.
Yet, in real-world scenarios, there is no access to $p_{\rvy|\rvx}$, so evaluating the distortion in such cases is impossible.

Two prominent examples of distortion measures are (1) the squared error $\Delta_\text{SE}(\vx,\hat{\vx}) = \norm{\vx-\hat{\vx}}_2^2$; and
(2) LPIPS~\cite{UnreasonableEffectivenessDeep2018zhanga}, a full-reference (perceptual) distortion measure that compares weighted features extracted from neural networks such as VGG~\cite{VeryDeepConvolutional2015simonyana}.
In \Cref{sec:lpips_supp} we show that LPIPS can be interpreted as a squared error in latent space.
Denoting by $\vz$ the feature vector corresponding to $\vx$, we have
$\Delta_\text{LPIPS}(\vx,\hat{\vx}) = \Delta_\text{SE}(\vz,\hat{\vz})$.

\subsection{Derivation}
\label{sec:proxmse}

Let $\hat{\rvx}$ be some estimator, and let $\rvx^*=\E \lft[ \rvx | \rvy \rgt]$ be the posterior mean (the MMSE estimator).
We define
\begin{align}
    \text{ProxMSE}(\hat{\rvx})\coloneqq\E_{(\hat\vx, \vx^*) \sim p_{\hat\rvx, \rvx^*}} \lft[ \norm{\hat{\vx} - \vx^*}_2^2 \rgt],
\end{align}
where the right hand side is the MSE between $\hat{\rvx}$ and $\rvx^{*}$.
Interestingly, ProxMSE satisfies the following appealing property (the proof is deferred to \Cref{sec:proofs}):
\begin{restatable}{proposition}{rankorder}
\label{prop:proxmse}
    The ProxMSE and the MSE of an estimator $\hat{X}$ are equal up to a constant which does not depend on $\hat\rvx$,
    \begin{align}
        \text{ProxMSE}(\hat{\rvx}) = \text{MSE}(\rvx,\hat{\rvx}) - d^*.\label{eq:prox-decomp}
    \end{align}
    Namely, the ranking order of estimators according to their ProxMSE is equivalent to that according to their MSE.
\end{restatable}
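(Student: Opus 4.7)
The plan is to prove this via the classical orthogonality (Pythagoras) decomposition associated with the MMSE estimator, exploiting the Markov chain $\rvx \to \rvy \to \hat\rvx$ that was stated back in \Cref{sec:background}. The idea is to split the error $\rvx - \hat\rvx$ into two pieces, one comparing $\rvx$ to the posterior mean and one comparing the posterior mean to the estimator, and then show the cross term vanishes.

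Concretely, I would write $\rvx - \hat\rvx = (\rvx - \rvx^*) + (\rvx^* - \hat\rvx)$, take squared $\ell_2$ norms, and take expectation over $p_{\rvx,\hat\rvx}$. This yields
\begin{align}
\text{MSE}(\rvx,\hat\rvx) = \E\!\left[\norm{\rvx-\rvx^*}_2^2\right] + \E\!\left[\norm{\rvx^*-\hat\rvx}_2^2\right] + 2\,\E\!\left[(\rvx-\rvx^*)^\top(\rvx^*-\hat\rvx)\right].
\end{align}
The middle term is exactly $\text{ProxMSE}(\hat\rvx)$ by definition, and the first term is a quantity $d^* \coloneqq \E[\norm{\rvx-\rvx^*}_2^2]$ that depends only on the joint distribution $p_{\rvx,\rvy}$, not on the estimator $\hat\rvx$. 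So it only remains to show the cross term is zero.

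For the cross term I would condition on $\rvy$ and use the tower property. Given $\rvy$, the vector $\rvx^*=\E[\rvx\mid\rvy]$ is deterministic, and by the Markov assumption $\rvx \to \rvy \to \hat\rvx$ the variables $\rvx$ and $\hat\rvx$ are conditionally independent given $\rvy$. Working coordinate-wise, this gives
\begin{align}
\E\!\left[(\rvx_i-\rvx^*_i)(\rvx^*_i-\hat\rvx_i)\,\middle|\,\rvy\right]
= \E[\rvx_i-\rvx^*_i\mid\rvy]\cdot\E[\rvx^*_i-\hat\rvx_i\mid\rvy] = 0,
\end{align}
since $\E[\rvx_i\mid\rvy]=\rvx^*_i$ by definition of the posterior mean. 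Summing over coordinates and taking the outer expectation shows the cross term vanishes, yielding $\text{MSE}(\rvx,\hat\rvx) = d^* + \text{ProxMSE}(\hat\rvx)$, which is the claim.

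I do not expect any serious obstacle: the only subtlety is being explicit that conditional independence (not mere uncorrelatedness) of $\rvx$ and $\hat\rvx$ given $\rvy$ is what is needed to factor the conditional expectation of the product, and that this is precisely the Markov assumption the paper already made for any restoration algorithm. Once that observation is made, the decomposition is a one-line Pythagorean identity and the ranking-equivalence corollary (monotonicity of the map $m \mapsto m - d^*$) is immediate.
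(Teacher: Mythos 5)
Your proof is correct and follows essentially the same route as the paper's: the same Pythagorean decomposition around the posterior mean $\rvx^*$, with the cross term killed by conditioning on $\rvy$, invoking the conditional independence of $\rvx$ and $\hat\rvx$ given $\rvy$ together with $\E[\rvx-\rvx^*\mid\rvy]=0$. The only cosmetic difference is that you argue coordinate-wise while the paper factors the conditional expectation of the inner product directly.
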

Note that~\cref{eq:prox-decomp} was originally proved by~\citet{TheoryDistortionPerceptionTradeoff2021freirich} (Lemma 2, Appendix B.1). \citet{PosteriorMeanRectifiedFlow2024ohayon} used $\text{ProxMSE}(\hat{\rvx})$ as a no-reference distortion measure, but did not assess its practical validity in experiments.

In practice, we do not have access to the true MMSE estimator, but rather to an approximation of it (typically a neural network trained to minimize the MSE).
In \Cref{sec:proxmse_bound} we develop a bound on this approximation.
Lastly, since LPIPS is just a MSE in latent space, we similarly define
\begin{align}
    \text{ProxLPIPS}(\hat{\rvx})\coloneqq\E_{(\hat\vz, \vz^*) \sim p_{\hat\rvz, \rvz^*}} \lft[ \norm{\hat{\vz} - \vz^*}_2^2 \rgt],
\end{align}
where $\rvz^*$ is the MMSE estimator \emph{in the latent space}, which again is approximated using a neural network trained to minimize the LPIPS loss.

\subsection{Experiments}
\label{sec:metric-exp}

\begin{figure}[tb]
    \centering
    \includegraphics[width=\linewidth]{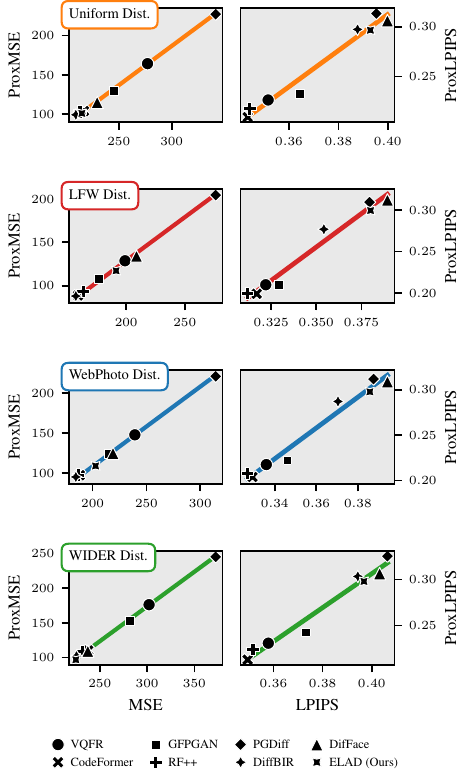}
    \caption{\textbf{Proxy distortion measures.}
    The plots compare the proxy measures with their true counterparts, for several state-of-the-art methods evaluated on the synthetic CelebA-Test datasets (\Cref{sec:deg_est}).
    A linear regression line is drawn for better clarity (for ProxMSE the slope equals one, following \Cref{prop:proxmse}).
    ProxMSE and ProxLPIPS rank methods similarly to the MSE and LPIPS measures without the need for ground-truth images.
    }
    \label{fig:bfr_rmse_id}
\end{figure}

\begin{table*}[tb]
    \centering
    \setlength{\tabcolsep}{3pt}
    \renewcommand\theadalign{bc}
    \small
    \begin{NiceTabular}{@{}lcccccccccccccc@{}}\toprule
    & \multicolumn{4}{c}{\thead{LFW-Test}} && \multicolumn{4}{c}{\thead{WebPhoto-Test}} && \multicolumn{4}{c}{\thead{WIDER-Test}} \\
\cmidrule{2-5} \cmidrule{7-10} \cmidrule{12-15}
\thead{Method} & \thead{FID $\downarrow$} & \thead{Proxy\\MSE} $\downarrow$ & \thead{Proxy\\LPIPS} $\downarrow$ & \thead{Proxy\\CMSE} $\downarrow$ && \thead{FID $\downarrow$} & \thead{Proxy\\MSE} $\downarrow$ & \thead{Proxy\\LPIPS} $\downarrow$ & \thead{Proxy\\CMSE} $\downarrow$ && \thead{FID $\downarrow$} & \thead{Proxy\\MSE} $\downarrow$ & \thead{Proxy\\LPIPS} $\downarrow$ & \thead{Proxy\\CMSE} $\downarrow$ \\
\midrule

\quad PGDiff~\cite{PGDiffGuidingDiffusion2023yang}    & \colorbox{tabfirst}{43.52} & 195.20 & \colorbox{tabsecond}{0.319} & 99.0 
          && 84.94 & 225.7 & 0.368 & 90.4 
          && 42.60 & 326.9 & 0.373 & 80.9 \\
\quad DifFace~\cite{DifFaceBlindFace2023yue}    & 47.39 & \colorbox{tabsecond}{131.0} & \colorbox{tabsecond}{0.319} & \colorbox{tabsecond}{76.7} 
          && \colorbox{tabfirst}{80.88} & \colorbox{tabsecond}{108.3} & \colorbox{tabsecond}{0.354} & \colorbox{tabsecond}{60.6} 
          && \colorbox{tabfirst}{37.03} & \colorbox{tabsecond}{113.6} & \colorbox{tabsecond}{0.334} & \colorbox{tabsecond}{49.7} \\
\quad ELAD (Ours)& \colorbox{tabsecond}{46.02} & \colorbox{tabfirst}{114.0} & \colorbox{tabfirst}{0.306} & \colorbox{tabfirst}{62.4} 
          && \colorbox{tabsecond}{81.89} & \colorbox{tabfirst}{92.7} & \colorbox{tabfirst}{0.343} & \colorbox{tabfirst}{48.3} 
          && \colorbox{tabsecond}{37.50} & \colorbox{tabfirst}{101.3} & \colorbox{tabfirst}{0.328} & \colorbox{tabfirst}{39.8} \\

\bottomrule

    \end{NiceTabular}
    \caption{Quantitative comparison between plug-and-play diffusion methods: PGDiff, DifFace, and ELAD on real-world BFR datasets.
    We highlight the \rectlabel{tabfirst}{first} and \rectlabel{tabsecond}{second} best-performing methods in each measure.
    ELAD is better in terms of distortion (ProxMSE, ProxLPIPS) and consistency (ProxCMSE), with minimal impact on perception (FID).}
    \label{tab:elad}
\end{table*}

We use the BFR regressor of DifFace~\cite{DifFaceBlindFace2023yue} as our approximate MMSE estimator.
This model was trained on the synthetic degradation process defined in~\cref{eq:bfr_model}.
For ProxLPIPS, we re-train the same model with an LPIPS loss: the VGG network is used to extract image features and the rest of the training scheme follows that of DifFace's regressor.

\paragraph{Synthetic datasets.}
In~\Cref{fig:bfr_rmse_id} we compare the proxy measures with their true counterparts (\eg, ProxMSE vs. MSE) on the synthetic datasets presented in \Cref{sec:deg_est}.
The figure shows that ProxMSE aligns well with MSE, implying that it is a reliable, no-reference proxy MSE measure.
Similarly, ProxLPIPS displays an excellent alignment with LPIPS, but it is not as accurate as ProxMSE.
We hypothesize that a better alignment might be possible with a better-trained LPIPS estimator. %

\paragraph{Real-world datasets.}
After establishing that ProxMSE and ProxLPIPS can serve as reliable proxy measures of MSE and LPIPS, respectively, we turn to evaluate existing real-world image restoration methods on datasets where MSE and LPIPS are impossible to compute.
\Cref{tab:bfr_rmse_real} in \Cref{sec:eval_supp} compares state-of-the-art \textbf{end-to-end} methods on the LFW-Test, WebPhoto-Test, and WIDER-Test datasets.
PMRF~\cite{PosteriorMeanRectifiedFlow2024ohayon} stands out in ProxMSE and ProxCMSE, which comes with no surprise as the method is trained to achieve the best MSE possible under a perfect perceptual index constraint. Regarding perception alone, DiffBIR~\cite{DiffBIRBlindImage2024lin} and RestorFormer++~\cite{RestoreFormerRealWorldBlind2023wang} generally lead.
Note that such a comparison is impossible with prior no-reference quality measures, as those typically do not correlate with the distortion.

\paragraph{ELAD evaluation.}
Both in synthetic (\Cref{fig:bfr_rmse_id,fig:bfr_crmse}) and real-world datasets (\Cref{tab:elad}), ELAD improves the consistency over DifFace~\cite{DifFaceBlindFace2023yue} and PGDiff~\cite{PGDiffGuidingDiffusion2023yang} considerably, as reflected by the (Prox)CMSE. A somewhat surprising side-effect of the improved consistency is an improved distortion performance, as seen by (Prox)MSE and (Prox)LPIPS. The slight decrease in FID compared to DifFace is expected as we add a guidance term that may compete with the unconditional generative prior. As seen in \Cref{fig:elad_consistency,fig:images}, this decrease is not noticeable. In fact, ELAD is highly competitive in terms of perceptual quality compared to end-to-end methods (see~\Cref{fig:images}). All in all, ELAD is the state-of-the-art plug-and-play method for BFR tasks.

\section{Related work}
\label{sec:related-work}

\paragraph{Degradation estimation and its application.}
The idea of identifying the degradations that a given image has gone through was previously explored in the literature in different contexts.
In~\cite{ChainRestorationMultiTaskImage2024cao}, a classifier was trained to determine which degradation is present in a given image, out of 4 possibilities (rain, haze, noise, low-light).
An image captioner was trained in \cite{DaLPSRLeverageDegradationAligned2024jiang} to produce the list of degradations from an image, and a coarse description of their severity (\eg, high amount of noise, low amount of blur).
These methods neglect the \emph{order} and the exact parameters' \emph{values} of each degradation.
Another line of work~\cite{UnpairedRealWorldSuperResolution2022romero,MetricLearningBased2022mou,ExploreImageDeblurring2021tran} learns the latent representation of degradations in an unsupervised manner.
In all of the above, the identified degradation is used as an additional input condition for the restoration algorithm.
This is fundamentally different than our approach, as we do not train an algorithm conditioned on the predicted degradations, but rather use the estimated degradations to approximate the likelihood.
In~\cite{UnprocessingImagesLearned2019brooks}, camera sensor noise was modeled and fitted as a heteroscedastic Gaussian distribution to mimic a real-world noisy dataset, and ~\cite{ZeroShotSuperResolutionUsing2018shochera,Ji_2020_CVPR_Workshops} optimized a down-sampling kernel per-image. Both are not applicable to a more complex chain of degradations.

\paragraph{Consistency.}
Measuring the consistency of the reconstructions with the inputs is common for non-blind image restoration tasks, such as noiseless super-resolution and JPEG decoding~\cite{SRFlowLearningSuperResolution2020lugmayr,NTIRE2021Learning2021lugmayra,NTIRE2022Challenge2022lugmayr,ExplorableSuperResolution2020bahata,WhatsImageExplorable2021bahat,HighPerceptualQualityJPEG2023man}. The work in \cite{HighPerceptualQuality2021ohayona,StochasticImageDenoising2021kawarb} considered the consistency in noisy problems to be testing the residual image between the degraded input and the restored output for normality.~\citet{ReasonsSuperiorityStochastic2023ohayon} generalized this notion as a requirement for the similarity between the conditional distributions $p_{\rvy|\rvx}$ and $p_{\rvy|\hat\rvx}$. However, $p_{\rvy|\rvx}$ is unknown in blind restoration tasks.

\paragraph{Plug \& play image restoration.}
Blind plug-and-play methods typically assume a simple parametric family of degradations (\eg, convolution with a fixed-sized kernel), and attempt to jointly estimate the degradation's parameters (\eg, the kernel's weights) and the clean image ~\cite{ParallelDiffusionModels2023chung,FastDiffusionEM2024laroche,GibbsDDRMPartiallyCollapsed2023murataa,TamingGenerativeDiffusion2024tua}.
Such parametric families are typically too narrow to describe complex degradations that may occur in practice (\eg, compression artifacts).
Another line of work utilizes heuristic guidance terms and initialization schemes to generate images using a generative model, such that the generated outputs share the same features with the given degraded measurement~\cite{DifFaceBlindFace2023yue,PGDiffGuidingDiffusion2023yang}.
Due to the heuristic nature of such methods, the restored images may ``over-fit'' the degraded inputs and produce artifacts, or ``under-fit'' them and become inconsistent.

\paragraph{Blind (no-reference) performance measures.}
Evaluating real-world image restoration methods is commonly done using no-reference quality measures (\eg NIQE~\cite{MakingCompletelyBlind2013mittala}, NIMA~\cite{NIMANeuralImage2018talebi}, ClipIQA~\cite{ExploringCLIPAssessing2023wang}) or statistical divergences (\eg, FID~\cite{AssessingGenerativeModels2018sajjadi}, KID~\cite{GANsTrainedTwo2017heusela}).
All of these measures do not consider the given degraded inputs. In other words, they may even highly reward a restoration algorithm for producing outputs that are entirely inconsistent with the inputs.

\paragraph{Real-world paired datasets.}Another way to evaluate real-world image restoration algorithms is to construct datasets that consist of pairs of low and high-quality images, \eg, where the first is acquired with a mobile phone and the second with a high-quality DSLR camera~\cite{BenchmarkingDenoisingAlgorithms2017plotza,Ji_2020_CVPR_Workshops,LearningRestoreHazy2021zhang,UnderwaterImageRestoration2022han}.
This approach is too costly, and may not faithfully represent the distribution of real-world degraded images~\cite{RealESRGANTrainingRealWorld2021wangb,RealWorldBlindFace2021wang,RobustBlindFace2022zhou}.

\section{Conclusion and limitations}
\label{sec:conclusion}

This work aimed to provide practical tools to help foster progress in a highly challenging task: Real-world image restoration.
We proposed ELA, a new approach to approximate the consistency of a reconstructed candidate with a given degraded image.
Our method relies on a novel degradation estimator, which we train to predict the chain of degradations that a given degraded image has gone through.
Using ELA, we directly define measures of consistency for real-world image restoration algorithms (Prox)CMSE, and develop ELAD: a new plug-and-play real-world image restoration algorithm that beats its predecessors.
Moreover, our proposed ProxMSE and ProxLPIPS measures offer a new way to indicate the \emph{distortion} of real-world image restoration methods, without any access to the ground-truth images.
While our work provides new effective tools for real-world image restoration, it also has several limitations.
First, the parametrization of the degradation family may limit the effectiveness of the degradation estimator.
For example, some images in WebPhoto seem to contain haze (see, \eg,~\Cref{sec:datasets_supp}), which is not accounted in~\cref{eq:bfr_model}.
Second, ELA utilizes a degradation predictor to approximate the mean of $p_{Y|X}$, yet it is possible that many different degradations correspond to a given degraded image.
A more faithful approximation could be one that averages over such space of possible degradations.
Third, our proxy measures assumes access to accurate approximations of MMSE estimators.
While in \Cref{sec:proxmse_bound} we provide an upper bound for the error of such a measure, the tightness of this bound remains unclear in practice.
Apart from addressing the above limitations, follow-up work could explore a multitude of different directions, such as
\emph{(i)} testing our tools on other real-world tasks such as blind super-resolution (BSR), and \emph{(ii)} training and evaluating other methods (\eg, CodeFormer or DiffBIR) on synthetic datasets that mimic real-world ones (as described in~\cref{sec:deg_est}).

\paragraph{Acknowledgment.}
We thank Hila Manor and Noam Elata for proofreading an earlier version of the paper.

\clearpage
{
    \small
    \bibliographystyle{ieeenat_fullname}

}

\newpage
\setcounter{page}{1}
\onecolumn
\begin{center}
\Large
\textbf{\thetitle}\\
\vspace{0.5em}Supplementary Material \\
\vspace{1.0em}
\end{center}
\appendix

\section{Supplementary results}
\label{sec:eval_supp}

\subsection{Proxy performance of real-world end-to-end methods}

For completeness, we evaluate real-world blind face restoration end-to-end methods in \Cref{tab:bfr_rmse_real} using the proxy measures we defined in this paper. To complement our distortion measures, we measure the perceptual quality of the reconstructions with FID~\cite{GANsTrainedTwo2017heusela}. It is important to note that the performance of ELAD, despite being a plug-and-play method, is not far from that of the leading end-to-end methods, as evident from  \cref{fig:images}.

\begin{table*}[b]
    \centering
    \setlength{\tabcolsep}{3pt}
    \renewcommand\theadalign{bc}
    \small
    \begin{NiceTabular}{@{}lcccccccccccccc@{}}\toprule
    & \multicolumn{4}{c}{\thead{LFW-Test}} && \multicolumn{4}{c}{\thead{WebPhoto-Test}} && \multicolumn{4}{c}{\thead{WIDER-Test}} \\
\cmidrule{2-5} \cmidrule{7-10} \cmidrule{12-15}
\thead{Method} & \thead{FID $\downarrow$} & \thead{Proxy\\MSE} $\downarrow$ & \thead{Proxy\\LPIPS} $\downarrow$ & \thead{Proxy\\CMSE} $\downarrow$ && \thead{FID $\downarrow$} & \thead{Proxy\\MSE} $\downarrow$ & \thead{Proxy\\LPIPS} $\downarrow$ & \thead{Proxy\\CMSE} $\downarrow$ && \thead{FID $\downarrow$} & \thead{Proxy\\MSE} $\downarrow$ & \thead{Proxy\\LPIPS} $\downarrow$ & \thead{Proxy\\CMSE} $\downarrow$ \\
\midrule

VQFR~\cite{VQFRBlindFace2022gu}       & 51.31 & 126.8 & \colorbox{tabthird}{0.221} & 46.8 
          && \colorbox{tabsecond}{75.86} & 156.7 & \colorbox{tabsecond}{0.278} & 45.8 
          && 44.09 & 282.6 & 0.333 & 41.0 \\
CodeFormer~\cite{RobustBlindFace2022zhou} & 52.84 & 90.1 & \colorbox{tabfirst}{0.205} & 38.7 
          && 83.93 & \colorbox{tabsecond}{78.7} & \colorbox{tabfirst}{0.250} & \colorbox{tabthird}{33.6} 
          && \colorbox{tabsecond}{39.22} & \colorbox{tabthird}{141.3} & \colorbox{tabfirst}{0.274} & 31.5 \\
GFPGAN~\cite{RealWorldBlindFace2021wang}     & \colorbox{tabthird}{50.32} & 110.7 & 0.223 & 64.9 
          && 87.29 & 626.9 & 0.316 & 532.1 
          && \colorbox{tabthird}{39.29} & 296.1 & \colorbox{tabthird}{0.319} & 137.0 \\
RF++~\cite{RestoreFormerRealWorldBlind2023wang}       & 50.72 & \colorbox{tabthird}{89.1} & \colorbox{tabsecond}{0.211} & \colorbox{tabthird}{26.0} 
          && \colorbox{tabfirst}{75.50} & 334.3 & \colorbox{tabthird}{0.282} & 247.3 
          && 45.45 & 201.3 & \colorbox{tabsecond}{0.317} & \colorbox{tabthird}{29.8} \\
DiffBIR~\cite{DiffBIRBlindImage2024lin}    & \colorbox{tabfirst}{40.90} & \colorbox{tabsecond}{82.8} & 0.283 & \colorbox{tabsecond}{18.5} 
          && 92.67 & \colorbox{tabthird}{82.9} & 0.357 & \colorbox{tabsecond}{16.9} 
          && \colorbox{tabfirst}{35.83} & \colorbox{tabsecond}{120.0} & 0.345 & \colorbox{tabsecond}{18.6} \\
PMRF~\cite{PosteriorMeanRectifiedFlow2024ohayon}       & \colorbox{tabsecond}{49.49} & \colorbox{tabfirst}{40.1} & 0.251 & \colorbox{tabfirst}{13.5} 
          && \colorbox{tabthird}{81.03} & \colorbox{tabfirst}{41.9} & 0.308 & \colorbox{tabfirst}{12.9} 
          && 41.19 & \colorbox{tabfirst}{86.3} & 0.328 & \colorbox{tabfirst}{16.1} \\

\bottomrule

    \end{NiceTabular}
    \caption{Quantitative comparison of leading end-to-end methods on real-world blind face restoration datasets. We evaluate perceptual quality via FID, distortion via ProxMSE, and ProxLPIPS, and consistency via ProxCMSE. We highlight the \rectlabel{tabfirst}{first}, \rectlabel{tabsecond}{second}, and \rectlabel{tabthird}{third} best-performing methods in each measure. PMRF stands out in ProxMSE and ProxCMSE, which comes with no surprise as the method is trained to achieve the best MSE possible under a perfect perceptual index constraint. In terms of perceptual quality alone, DiffBIR and RestorFormer++ generally outperforms the other methods.}
    \label{tab:bfr_rmse_real}
\end{table*}

\subsection{CelebA-Test synthetic results}

In \Cref{fig:bfr_crmse,fig:bfr_rmse_id} in the main paper, we demonstrate the alignment between each proxy measure and its true counterpart measure on synthetic CelebA-Test datasets, using various plug-and-play and end-to-end methods. We also provide the same data in~\Cref{tab:celeba}.
As we noted in~\cref{sec:metric-exp}, a key takeaway from these results is the dominance of ELAD over the other plug-and-play methods in all measures. This is consistent with the results in \Cref{tab:elad}. 

\subsection{Additional visual results}

In \Cref{fig:images_celeba,fig:images_lfw,fig:images_webphoto,fig:images_wider}, we present additional restoration examples produced by various methods, including ELAD, on CelebA-Test, LFW-Test, WebPhoto-Test, and WIDER-test.

\section{Proof of proposition 1}
\label{sec:proofs}
\rankorder*
\begin{proof}

The proof follows directly from Lemma 2 in~\cite{TheoryDistortionPerceptionTradeoff2021freirich} (Appendix B.1).
Namely, for any estimator we can write
\begin{align}
    \mathbb{E}[\norm{X-\hat{X}}^{2}]&=\mathbb{E}[\norm{(X-X^{*})-(\hat{X}-X^{*})}^{2}]\\&=\mathbb{E}[\norm{X-X^{*}}^{2}]+\mathbb{E}[\norm{\hat{X}-X^{*}}^{2}]-2\mathbb{E}[(X-X^{*})^{\top}(\hat{X}-X^{*})].\label{eq:dror1}
\end{align}
By the law of total expectation, it follows that
\begin{align}
    \mathbb{E}[(X-X^{*})^{\top}(\hat{X}-X^{*})]=\mathbb{E}\left[\mathbb{E}\left[(X-X^{*})^{\top}(\hat{X}-X^{*}) \lvert Y\right]\right].
\end{align}
Since $\hat{X}$ and $X$ are independent given $Y$, and since $X^{*}$ is a function of $Y$, we have
\begin{align}
    \mathbb{E}[(X-X^{*})^{\top}(\hat{X}-X^{*})]=\mathbb{E}\left[\mathbb{E}\left[(X-X^{*})^{\top}|Y\right]\mathbb{E}\left[(\hat{X}-X^{*})|Y\right]\right].\label{eq:dror2}
\end{align}
Now, it holds that
\begin{align}
    \mathbb{E}\left[(X-X^{*})^{\top}|Y\right]&=\mathbb{E}\left[X|Y\right]-\mathbb{E}\left[X^{*}|Y\right]\\
    &=\mathbb{E}\left[X|Y\right]-\mathbb{E}\left[X|Y\right]=0.
\end{align}
Plugging this back to~\cref{eq:dror2}, we get $\smash{\mathbb{E}[(X-X^{*})^{\top}(\hat{X}-X^{*})]=0}$, and therefore~\cref{eq:dror1} becomes
\begin{align}
    \text{MSE}(X,\hat{X})&=\E [ \norm{\rvx - \hat{\rvx}}_2^2 ] \\
    &=\E \lft[ \norm{\rvx - \rvx^{*}}_2^2 \rgt]+\E [ \norm{\hat{\rvx} - \rvx^{*}}_2^2 ]\\
    &=d^{*}+\E [ \norm{\hat{\rvx} - \rvx^{*}}_2^2]\label{eq:dror}
\end{align}
where $\smash{d^*=\E \lft[ \norm{\rvx - \rvx^*}_2^2 \rgt]}$ (the MSE of the MMSE estimator) does not change with $\hat{\rvx}$, and
\begin{align}
    \E [ \norm{\hat{\rvx} - \rvx^{*}}_2^2 ]=\text{ProxMSE}(\hat{X}).
\end{align}
Subtracting $d^{*}$ from both sides in~\cref{eq:dror} leads to the desired result.

\end{proof}

\section{MMSE estimator approximation effect}
\label{sec:proxmse_bound}
In practice, we only have an approximation of the true MMSE estimator $\rvx^{*}$ (\eg, a neural network trained to minimize the MSE loss).
Denoting by $\tilde{\rvx}^{*}$ such an approximation of $X^{*}$, we are interested in the effect of the approximation error $\rvr=\tilde{\rvx}^{*}-\rvx^*$ on the proposed ProxMSE measure.
To this end, we present the following bound on the absolute error of ProxMSE.
\begin{restatable}{proposition}{rankerror}
    The absolute error of ProxMSE when using $\tilde{\rvx}^{*}$ instead of $\rvx^*$ is bounded by
    \begin{equation}
    \label{eq:prox_bound}
        \abs{\E \lft[ \norm{\hat{\rvx} - \rvx^*}_2^2 \rgt] - \E \lft[ \norm{\hat{\rvx} - \tilde{\rvx}^{*}}_2^2 \rgt]} \leq \E \lft[ \norm{\rvr}_2^2 + 4 \norm{\rvr}_1 \rgt],
    \end{equation}
    where $\rvr=\tilde{\rvx}^{*}-\rvx^{*}$ and we assume that image pixels are taking values in $[0,1]$.
\end{restatable}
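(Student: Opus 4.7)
The plan is to expand the two squared norms in the claim, exhibit their difference as a first-order term plus a quadratic term in $\rvr$, and then bound each piece using the fact that pixel values lie in $[0,1]$. Concretely, writing $\tilde{\rvx}^{*}=\rvx^{*}+\rvr$, I would first use the identity $\norm{\va-\vb}_2^{2}=\norm{\va}_2^{2}-2\va^{\top}\vb+\norm{\vb}_2^{2}$ to obtain the pointwise (in sample space) identity
\begin{align}
    \norm{\hat{\rvx}-\rvx^{*}}_2^{2}-\norm{\hat{\rvx}-\tilde{\rvx}^{*}}_2^{2} = 2(\hat{\rvx}-\rvx^{*})^{\top}\rvr - \norm{\rvr}_2^{2}.
\end{align}

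Next, I would move the absolute value inside the expectation via Jensen's inequality, so that the left-hand side of the bound is at most $\E[\,\abs{2(\hat{\rvx}-\rvx^{*})^{\top}\rvr}\,]+\E[\norm{\rvr}_2^{2}]$. To control the inner product I would apply H\"older's inequality: $\abs{(\hat{\rvx}-\rvx^{*})^{\top}\rvr}\le \norm{\hat{\rvx}-\rvx^{*}}_\infty\cdot\norm{\rvr}_1$. Since $\rvx$ takes values in $[0,1]^n$ and $\rvx^{*}=\E[\rvx\mid\rvy]$ is a conditional expectation, $\rvx^{*}$ also lies in $[0,1]^n$ almost surely; assuming the estimator $\hat{\rvx}$ produces outputs in the same range (as is standard for image restoration), the triangle inequality yields $\norm{\hat{\rvx}-\rvx^{*}}_\infty\le \norm{\hat{\rvx}}_\infty+\norm{\rvx^{*}}_\infty\le 2$. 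This gives the pointwise bound $2\abs{(\hat{\rvx}-\rvx^{*})^{\top}\rvr}\le 4\norm{\rvr}_1$, and assembling everything and taking expectations delivers the claimed inequality.

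The argument is essentially routine once the expansion is in place, so there is no major technical obstacle; the only subtle point is justifying the $\ell_\infty$ bound $\norm{\hat{\rvx}-\rvx^{*}}_\infty\le 2$. This is where the $[0,1]$ pixel assumption is used, and it is also the step that determines the constant $4$ in the final bound (a slightly tighter constant of $2$ would follow from the sharper observation that $\hat{\rvx}-\rvx^{*}\in[-1,1]^n$, but the looser triangle-inequality version is simpler and matches the stated claim). Note that we do not require $\tilde{\rvx}^{*}$ itself to lie in $[0,1]^n$: its deviation from $\rvx^{*}$ is absorbed by $\rvr$, and only $\hat{\rvx}$ and $\rvx^{*}$ need to be bounded to apply H\"older's inequality in the form above.
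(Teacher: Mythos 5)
Your proof is correct and follows essentially the same route as the paper's: expand the squared norms, separate the difference into the quadratic term $\norm{\rvr}_2^2$ plus cross terms linear in $\rvr$, then apply the triangle inequality, Jensen, and a H\"older-type bound $\abs{\inner{\cdot}{\rvr}}\le\norm{\cdot}_\infty\norm{\rvr}_1$ exploiting the $[0,1]$ pixel range to arrive at the constant $4$. The only substantive difference is the grouping of the cross terms: the paper bounds $\abs{\inner{\hat{\rvx}}{\rvr}}$ and $\abs{\inner{\tilde{\rvx}^{*}}{\rvr}}$ separately, which implicitly requires the \emph{approximate} estimator $\tilde{\rvx}^{*}$ to also take values in $[0,1]^n$, whereas your grouping $2(\hat{\rvx}-\rvx^{*})^{\top}\rvr$ only needs $\hat{\rvx}$ and the true posterior mean $\rvx^{*}$ to be bounded (the latter being automatic since $\rvx^{*}=\E[\rvx|\rvy]$) --- a marginally weaker hypothesis yielding the same constant, as you correctly note.
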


\begin{proof}
We start by decomposing $\smash{\norm{\hat{\rvx} - \rvx^*}_2^2}$ and $\smash{\norm{\hat{\rvx} - \tilde{\rvx}^{*}}_2^2}$. Since $\rvx^*=\tilde{\rvx}^{*}-\rvr$, it holds that
\begin{align}
    \norm{\hat{\rvx} - \rvx^*}_2^2 &= \norm{\hat{\rvx}}_2^2 + \norm{\rvx^*}_2^2 -2 \inner{\hat{\rvx}}{\rvx^*} \label{eq:decomp_xstar_norm}
\\ 
    &= \norm{\hat{\rvx}}_2^2 + \norm{\tilde{\rvx}^{*}}_2^2 + \norm{\rvr}_2^2 -2 \inner{\tilde{\rvx}^{*}}{\rvr} -2 \inner{\hat{\rvx}}{\rvx^*} \label{eq:decomp_xstar_norm_1}\\ 
    &= \norm{\hat{\rvx}}_2^2 + \norm{\tilde{\rvx}^{*}}_2^2 + \norm{\rvr}_2^2 -2 \inner{\tilde{\rvx}^{*}}{\rvr} -2 \inner{\hat{\rvx}}{\tilde{\rvx}^{*}} +2 \inner{\hat{\rvx}}{\rvr}, \label{eq:decomp_xstar_norm_2}
\end{align}
where we expanded $\norm{\rvx^*}_2^2$ in \cref{eq:decomp_xstar_norm_1} and $\inner{\hat{\rvx}}{\rvx^*}$ in \cref{eq:decomp_xstar_norm_2}. Moreover, we have
\begin{align}
\label{eq:decomp_xbar_norm}
    \norm{\hat{\rvx} - \tilde{\rvx}^{*}}_2^2 &= \norm{\hat{\rvx}}_2^2 + \norm{\tilde{\rvx}^{*}}_2^2 -2 \inner{\hat{\rvx}}{\tilde{\rvx}^{*}}.
\end{align}
Taking the absolute difference between the expected values in \cref{eq:decomp_xstar_norm,eq:decomp_xbar_norm}, we get
\begin{align}
    \abs{\E \lft[ \norm{\hat{\rvx} - \rvx^*}_2^2 \rgt] - \E \lft[ \norm{\hat{\rvx} - \tilde{\rvx}^{*}}_2^2 \rgt]} &= \abs{\E \lft[ \norm{\rvr}_2^2 +2 \inner{\hat{\rvx}}{\rvr} -2 \inner{\tilde{\rvx}^{*}}{\rvr} \rgt]} \\
    &\leq \E \lft[ \norm{\rvr}_2^2 \rgt] +2 \abs{\E \lft[ \inner{\hat{\rvx}}{\rvr} \rgt]} +2 \abs{\E \lft[ \inner{\tilde{\rvx}^{*}}{\rvr} \rgt]},\label{eq:triang}
\end{align}
where~\cref{eq:triang} holds due to the triangle inequality.
Moreover,
\begin{align}
\label{eq:jensen}
    & \abs{\E \lft[ \norm{\hat{\rvx} - \rvx^*}_2^2 \rgt] - \E \lft[ \norm{\hat{\rvx} - \tilde{\rvx}^{*}}_2^2 \rgt]} \leq \E \lft[ \norm{\rvr}_2^2 \rgt] +2 \E \lft[ \abs{ \inner{\hat{\rvx}}{\rvr} } \rgt] +2 \E \lft[ \abs{ \inner{\tilde{\rvx}^{*}}{\rvr} } \rgt],
\end{align}
where we used Jensen's inequality on the convex function $f(x)=|x|$.

Now, let us focus on $\abs{ \inner{\hat{\rvx}}{\rvr} }$. By applying the triangle inequality and assuming that each entry in $\hat\rvx$ takes values in $[0,k]$, we get
\begin{align}
    \abs{ \inner{\hat{\rvx}}{\rvr} } = \abs{\sum_i \hat\rvx_i \rvr_i} \leq \sum_i \abs{\hat\rvx_i \rvr_i} \leq \sum_i k \abs{\rvr_i} = k \norm{\rvr}_1.
\end{align}
Similarly, we have $\abs{ \inner{\tilde{\rvx}^{*}}{\rvr} } \leq k \norm{\rvr}_1$.
Substituting these bounds back into \cref{eq:jensen} and assuming $k=1$ (as typically used in practice), we get
\begin{align}
    \abs{\E \lft[ \norm{\hat{\rvx} - \rvx^*}_2^2 \rgt] - \E \lft[ \norm{\hat{\rvx} - \tilde{\rvx}^{*}}_2^2 \rgt]} \leq \E \lft[ \norm{\rvr}_2^2 \rgt] +4k \E \lft[ \abs{ \norm{\rvr}_1 } \rgt] = \E \lft[ \norm{\rvr}_2^2 \rgt] +4 \E \lft[ \abs{ \norm{\rvr}_1 } \rgt]
\end{align}
\end{proof}
In practice, computing $\rvr$ is impossible since $\rvx^*$ is unavailable. Hence, we showcase in controlled experiments (\Cref{sec:metric-exp}) that using an approximation of $\tilde{\rvx}^{*}$ still yields rankings that are consistent with the true MSE.

\section{LPIPS is MSE in latent space}
\label{sec:lpips_supp}

Denote by $\vf_l\in\R^{H_l\times W_l\times C_l}$ the channel-wise normalized feature of $\vx$ from the $l$'th layer and by $\vw_l\in\R^{C_l}$ a per-channel weight vector.
To compute LPIPS~\cite{UnreasonableEffectivenessDeep2018zhanga}, we average spatially and sum channel-wise a weighted $\ell_2$ distance per element,
\begin{align}
    \Delta_\text{LPIPS}(\vx,\hat{\vx}) = \sum_l \tfrac{1}{H_l W_l} \sum_{h,w} \norm{\vw_l \odot (\vf_{h,w}^l-\hat{\vf}^l_{h,w})}_2^2.
\end{align}
This is equivalent to an MSE between flattened feature vectors. Denote by $\vz=[\texttt{vec}(\vz_1),\ldots,\texttt{vec}(\vz_L)]^\top$, where $\vz_l = \tfrac{1}{\sqrt{H_l W_l}} \vw_l \odot \vf_l$, then
\begin{align}
    \Delta_\text{LPIPS}(\vx,\hat{\vx}) = \norm{\vz-\hat{\vz}}_2^2 = \Delta_\text{SE}(\vz,\hat{\vz}).
\end{align}

\begin{table*}[tb]
    \centering
    \setlength{\tabcolsep}{3pt}
    \renewcommand\theadalign{bc}
    \small
    
    \begin{NiceTabular}{@{}lccccccccccccc@{}}\toprule
    & \multicolumn{6}{c}{\thead{CelebA-Test (Uniform Dist. \#1)}} && \multicolumn{6}{c}{\thead{CelebA-Test (Uniform Dist. \#2)}} \\
\cmidrule{2-7} \cmidrule{9-14}
\thead{Method} & \thead{MSE} $\downarrow$ & \thead{Proxy\\MSE} $\downarrow$ & \thead{LPIPS} $\downarrow$ & \thead{Proxy\\LPIPS} $\downarrow$ & \thead{CMSE} $\downarrow$ & \thead{Proxy\\CMSE} $\downarrow$ && \thead{MSE} $\downarrow$ & \thead{Proxy\\MSE} $\downarrow$ & \thead{LPIPS} $\downarrow$ & \thead{Proxy\\LPIPS} $\downarrow$ & \thead{CMSE} $\downarrow$ & \thead{Proxy\\CMSE} $\downarrow$ \\
\midrule

\quad VQFR~\cite{VQFRBlindFace2022gu} &                      277.2 &                      164.2 & \colorbox{tabsecond}{0.35} &  \colorbox{tabthird}{0.226} & -- &                      48.1 &&                      637.4 &                      364.7 &                      0.45 &                      0.355 &                      46.31 &                      50.5 \\
\quad CodeFormer~\cite{RobustBlindFace2022zhou} &                      217.0 &                      103.8 &  \colorbox{tabfirst}{0.34} &  \colorbox{tabfirst}{0.209} & -- &  \colorbox{tabthird}{41.7} &&                      429.7 &                      151.1 &  \colorbox{tabfirst}{0.40} &  \colorbox{tabfirst}{0.245} &                      54.38 &                      59.9 \\
\quad GFPGAN~\cite{RealWorldBlindFace2021wang} &                      245.5 &                      129.5 &  \colorbox{tabthird}{0.36} &                      0.232 & -- &                      67.8 &&                      636.9 &                      363.6 &                      0.47 &                      0.369 &                      117.30 &                      122.4 \\
\quad RF++~\cite{RestoreFormerRealWorldBlind2023wang} & \colorbox{tabsecond}{213.3} &  \colorbox{tabthird}{102.6} &  \colorbox{tabfirst}{0.34} & \colorbox{tabsecond}{0.217} & -- & \colorbox{tabsecond}{30.4} &&                      554.2 &                      282.0 &                      0.45 &                      0.354 & \colorbox{tabsecond}{30.14} &  \colorbox{tabfirst}{35.2} \\
\quad DiffBIR~\cite{DiffBIRBlindImage2024lin} &  \colorbox{tabfirst}{209.4} &  \colorbox{tabfirst}{99.0} &                      0.39 &                      0.297 & -- &  \colorbox{tabfirst}{26.2} &&  \colorbox{tabthird}{384.1} &  \colorbox{tabthird}{100.2} &  \colorbox{tabthird}{0.44} &                      0.326 &  \colorbox{tabfirst}{27.66} & \colorbox{tabsecond}{35.3} \\
\emph{Plug \& play} \\
\quad PGDiff~\cite{PGDiffGuidingDiffusion2023yang} &                      341.6 &                      227.3 &                      0.40 &                      0.313 & -- &                      84.2 &&                      712.4 &                      441.5 &                      0.47 &                      0.379 &                      93.00 &                      93.5 \\
\quad DifFace~\cite{DifFaceBlindFace2023yue} &                      229.6 &                      114.9 &                      0.40 &                      0.306 & -- &                      63.6 && \colorbox{tabsecond}{371.7} & \colorbox{tabsecond}{90.9} &  \colorbox{tabthird}{0.44} &  \colorbox{tabthird}{0.308} &                      50.83 &                      58.0 \\
\quad ELAD (Ours) &  \colorbox{tabthird}{215.1} & \colorbox{tabsecond}{100.8} &                      0.39 &                      0.296 & -- &                      52.1 &&  \colorbox{tabfirst}{363.7} &  \colorbox{tabfirst}{83.4} & \colorbox{tabsecond}{0.43} & \colorbox{tabsecond}{0.306} &  \colorbox{tabthird}{42.41} &  \colorbox{tabthird}{49.2} \\

\bottomrule

    \CodeAfter
    \tikz \draw [dash pattern={on 2pt off 2pt}, shorten > = 2pt] (8.5-|2) -- (8.5-|15) ;
    \end{NiceTabular}

    \begin{NiceTabular}{@{}lccccccccccccc@{}}\toprule
    & \multicolumn{6}{c}{\thead{CelebA-Test (LFW Dist.)}} && \multicolumn{6}{c}{\thead{CelebA-Test (WebPhoto Dist.)}} \\
\cmidrule{2-7} \cmidrule{9-14}
\thead{Method} & \thead{MSE} $\downarrow$ & \thead{Proxy\\MSE} $\downarrow$ & \thead{LPIPS} $\downarrow$ & \thead{Proxy\\LPIPS} $\downarrow$ & \thead{CMSE} $\downarrow$ & \thead{Proxy\\CMSE} $\downarrow$ && \thead{MSE} $\downarrow$ & \thead{Proxy\\MSE} $\downarrow$ & \thead{LPIPS} $\downarrow$ & \thead{Proxy\\LPIPS} $\downarrow$ & \thead{CMSE} $\downarrow$ & \thead{Proxy\\CMSE} $\downarrow$ \\
\midrule

\quad VQFR~\cite{VQFRBlindFace2022gu} &                      199.0 &                      128.8 & \colorbox{tabsecond}{0.32} &  \colorbox{tabthird}{0.210} &                      40.33 &                      44.5 &&                      239.3 &                      147.6 & \colorbox{tabsecond}{0.34} &  \colorbox{tabthird}{0.218} &                      37.22 &                      39.2 \\
\quad CodeFormer~\cite{RobustBlindFace2022zhou} & \colorbox{tabsecond}{158.9} & \colorbox{tabsecond}{88.9} & \colorbox{tabsecond}{0.32} &  \colorbox{tabfirst}{0.199} &  \colorbox{tabthird}{31.58} &  \colorbox{tabthird}{35.0} &&  \colorbox{tabthird}{188.1} & \colorbox{tabsecond}{97.0} &  \colorbox{tabfirst}{0.33} &  \colorbox{tabfirst}{0.204} &  \colorbox{tabthird}{27.37} &  \colorbox{tabthird}{29.7} \\
\quad GFPGAN~\cite{RealWorldBlindFace2021wang} &                      176.7 &                      107.7 &  \colorbox{tabthird}{0.33} &  \colorbox{tabthird}{0.210} &                      54.93 &                      59.0 &&                      214.9 &                      123.2 &  \colorbox{tabthird}{0.35} &                      0.222 &                      55.33 &                      57.6 \\
\quad RF++~\cite{RestoreFormerRealWorldBlind2023wang} &  \colorbox{tabthird}{163.3} &  \colorbox{tabthird}{93.3} &  \colorbox{tabfirst}{0.31} & \colorbox{tabsecond}{0.200} & \colorbox{tabsecond}{22.19} & \colorbox{tabsecond}{26.7} && \colorbox{tabsecond}{187.1} & \colorbox{tabsecond}{97.0} &  \colorbox{tabfirst}{0.33} & \colorbox{tabsecond}{0.208} & \colorbox{tabsecond}{19.34} & \colorbox{tabsecond}{21.6} \\
\quad DiffBIR~\cite{DiffBIRBlindImage2024lin} &  \colorbox{tabfirst}{157.0} &  \colorbox{tabfirst}{88.1} &                      0.35 &                      0.277 &  \colorbox{tabfirst}{10.48} &  \colorbox{tabfirst}{14.6} &&  \colorbox{tabfirst}{184.4} &  \colorbox{tabfirst}{94.8} &                      0.37 &                      0.287 &  \colorbox{tabfirst}{10.65} &  \colorbox{tabfirst}{12.8} \\
\emph{Plug \& play} \\
\quad PGDiff~\cite{PGDiffGuidingDiffusion2023yang} &                      276.4 &                      204.4 &                      0.38 &                      0.309 &                      92.39 &                      95.8 &&                      314.3 &                      220.8 &                      0.39 &                      0.312 &                      77.37 &                      79.1 \\
\quad DifFace~\cite{DifFaceBlindFace2023yue} &                      208.8 &                      134.3 &                      0.39 &                      0.312 &                      71.35 &                      73.9 &&                      219.0 &                      124.4 &                      0.39 &                      0.309 &                      56.50 &                      58.2 \\
\quad ELAD (Ours) &                      191.4 &                      117.3 &                      0.38 &                      0.299 &                      56.91 &                      59.6 &&                      202.8 &  \colorbox{tabthird}{108.6} &                      0.39 &                      0.298 &                      44.23 &                      45.9 \\

\bottomrule

    \CodeAfter
    \tikz \draw [dash pattern={on 2pt off 2pt}, shorten > = 2pt] (8.5-|2) -- (8.5-|15) ;
    \end{NiceTabular}

    \begin{NiceTabular}{@{}lcccccc@{}}\toprule
    & \multicolumn{6}{c}{\thead{CelebA-Test (WIDER Dist.)}} \\
\cmidrule{2-7}
\thead{Method} & \thead{MSE} $\downarrow$ & \thead{Proxy\\MSE} $\downarrow$ & \thead{LPIPS} $\downarrow$ & \thead{Proxy\\LPIPS} $\downarrow$ & \thead{CMSE} $\downarrow$ & \thead{Proxy\\CMSE} $\downarrow$ \\
\midrule

\quad VQFR~\cite{VQFRBlindFace2022gu} &                      302.1 &                      176.1 & \colorbox{tabsecond}{0.36} &  \colorbox{tabthird}{0.231} &                      27.40 &                      28.7 \\
\quad CodeFormer~\cite{RobustBlindFace2022zhou} &                      237.7 &                      110.3 &  \colorbox{tabfirst}{0.35} &  \colorbox{tabfirst}{0.213} &  \colorbox{tabthird}{19.08} &  \colorbox{tabthird}{20.5} \\
\quad GFPGAN~\cite{RealWorldBlindFace2021wang} &                      281.5 &                      152.6 &  \colorbox{tabthird}{0.37} &                      0.242 &                      53.47 &                      54.7 \\
\quad RF++~\cite{RestoreFormerRealWorldBlind2023wang} &  \colorbox{tabthird}{232.1} &  \colorbox{tabthird}{108.7} &  \colorbox{tabfirst}{0.35} & \colorbox{tabsecond}{0.224} & \colorbox{tabsecond}{10.21} & \colorbox{tabsecond}{11.6} \\
\quad DiffBIR~\cite{DiffBIRBlindImage2024lin} & \colorbox{tabsecond}{225.3} & \colorbox{tabsecond}{102.1} &                      0.39 &                      0.303 &  \colorbox{tabfirst}{4.77} &  \colorbox{tabfirst}{6.2} \\
\emph{Plug \& play} \\
\quad PGDiff~\cite{PGDiffGuidingDiffusion2023yang} &                      372.5 &                      245.0 &                      0.41 &                      0.325 &                      51.30 &                      52.1 \\
\quad DifFace~\cite{DifFaceBlindFace2023yue} &                      237.3 &                      109.2 &                      0.40 &                      0.306 &                      33.10 &                      34.2 \\
\quad ELAD (Ours) &  \colorbox{tabfirst}{224.7} &  \colorbox{tabfirst}{96.8} &                      0.40 &                      0.298 &                      23.91 &                      25.0 \\

\bottomrule

    \CodeAfter
    \tikz \draw [dash pattern={on 2pt off 2pt}, shorten > = 2pt] (8.5-|2) -- (8.5-|8) ;
    \end{NiceTabular}
    
    \caption{Quantitative comparison between BFR methods on synthetic datasets defined in \Cref{sec:deg_est}.
    We highlight the \rectlabel{tabfirst}{first}, \rectlabel{tabsecond}{second}, and \rectlabel{tabthird}{third} best-performing methods in each measure. We divide between end-to-end methods and plug-and-play ones. ``Uniform Dist. \#1'' is used in \Cref{fig:bfr_rmse_id}, while ``Uniform Dist. \#2'' is used in \Cref{fig:bfr_crmse}. ``Uniform Dist. \#1'' is the  dataset introduced in \cite{RealWorldBlindFace2021wang}, which does not contain the degradations used, hence we cannot compute CMSE values.}
    \label{tab:celeba}
\end{table*}

\begingroup
\newcolumntype{M}[1]{>{\centering\arraybackslash}m{#1}}
\newcommand{\vcentered}[1]{\begin{tabular}{@{}l@{}} #1 \end{tabular}}
\setlength{\tabcolsep}{0pt} %
\renewcommand{\arraystretch}{0} %

\def\columns{5}
\def\totalwidth{0.9}

\FPeval{\colwidth}{clip(\totalwidth/\columns)}
\FPeval{\doublecolwidth}{clip(2*\totalwidth/\columns)}
\FPeval{\imgwidth}{\totalwidth/\columns *\columns}

\newcommand{\centered}[1]{\begin{tabular}{l} #1 \end{tabular}}

\begin{figure*}[tb]
    \centering
    \begin{tabular}{M{\colwidth\linewidth} M{\colwidth\linewidth} M{\colwidth\linewidth} M{\colwidth\linewidth} M{\colwidth\linewidth}}
    
	    \footnotesize{Degraded} &
	    \footnotesize{ELAD (Ours)} &
	    \footnotesize{DifFace~\cite{DifFaceBlindFace2023yue}} &
	    \footnotesize{PGDiff~\cite{PGDiffGuidingDiffusion2023yang}} &
	    \footnotesize{PMRF~\cite{PosteriorMeanRectifiedFlow2024ohayon}}\\
	    
        \rule{0pt}{0.8ex}\\

        \multicolumn{\columns}{c}{\centered{\includegraphics[width=\totalwidth\linewidth]{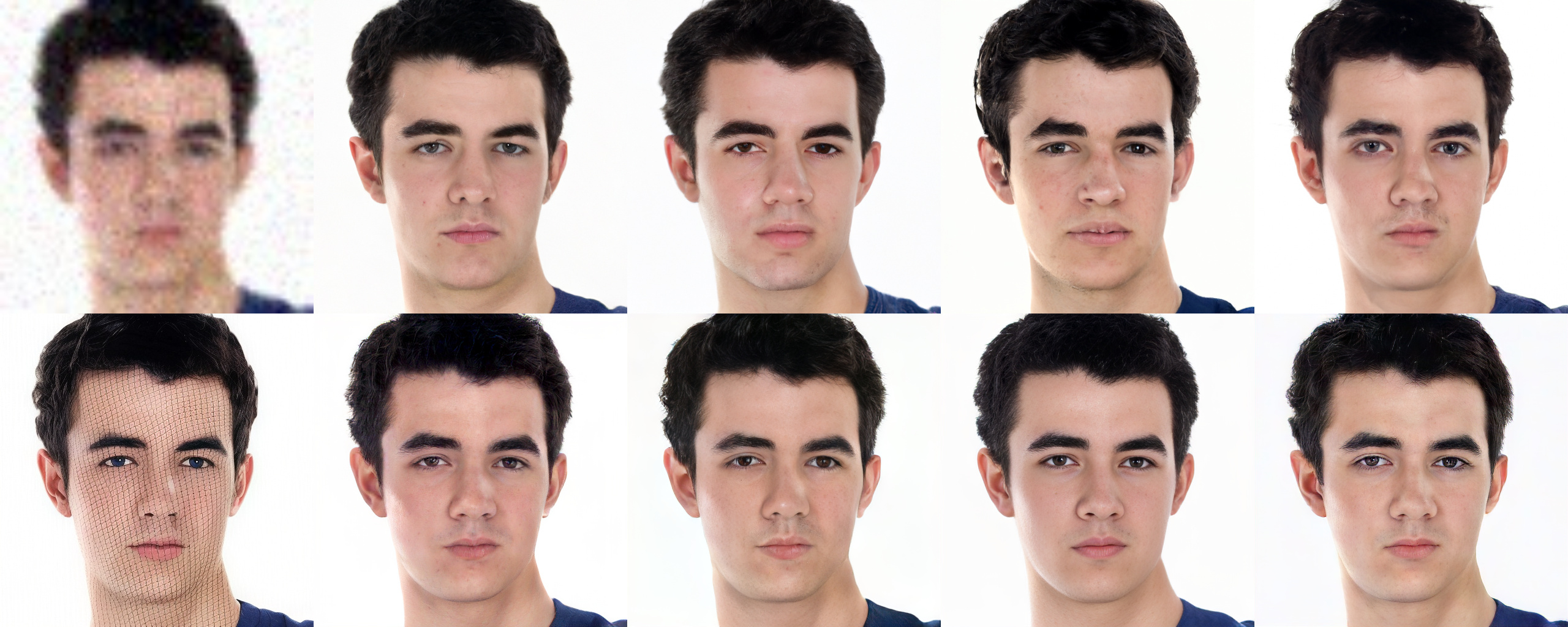}}} \\

        \rule{0pt}{0.8ex}\\

        \footnotesize{DiffBIR~\cite{DiffBIRBlindImage2024lin}} &
	    \footnotesize{RF++~\cite{RestoreFormerRealWorldBlind2023wang}} &
	    \footnotesize{GFPGAN~\cite{RealWorldBlindFace2024chen}} &
	    \footnotesize{CodeFormer~\cite{RobustBlindFace2022zhou}} &
	    \footnotesize{VQFR~\cite{VQFRBlindFace2022gu}}\\

        \rule{0pt}{1.6ex}\\

        \footnotesize{Degraded} &
	    \footnotesize{ELAD (Ours)} &
	    \footnotesize{DifFace~\cite{DifFaceBlindFace2023yue}} &
	    \footnotesize{PGDiff~\cite{PGDiffGuidingDiffusion2023yang}} &
	    \footnotesize{PMRF~\cite{PosteriorMeanRectifiedFlow2024ohayon}}\\
	    
        \rule{0pt}{0.8ex}\\

        \multicolumn{\columns}{c}{\centered{\includegraphics[width=\totalwidth\linewidth]{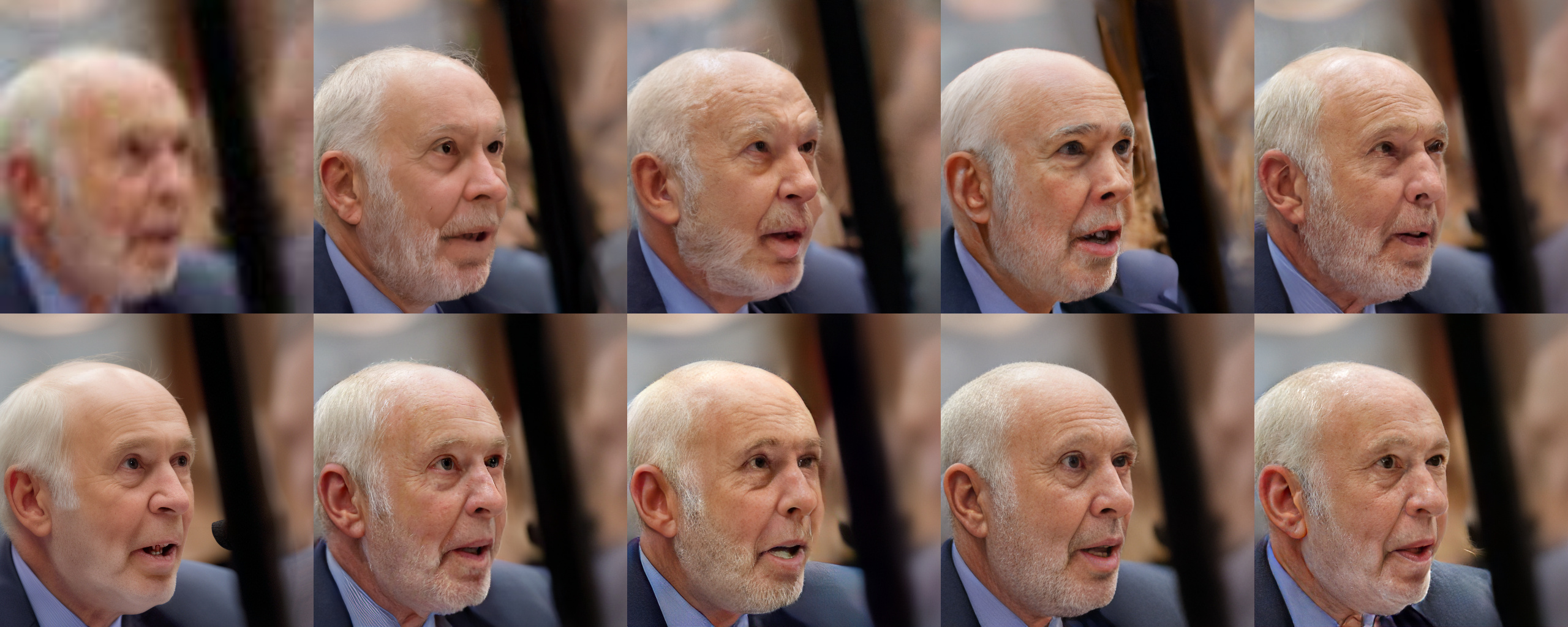}}} \\

        \rule{0pt}{0.8ex}\\

        \footnotesize{DiffBIR~\cite{DiffBIRBlindImage2024lin}} &
	    \footnotesize{RF++~\cite{RestoreFormerRealWorldBlind2023wang}} &
	    \footnotesize{GFPGAN~\cite{RealWorldBlindFace2024chen}} &
	    \footnotesize{CodeFormer~\cite{RobustBlindFace2022zhou}} &
	    \footnotesize{VQFR~\cite{VQFRBlindFace2022gu}}\\

        \rule{0pt}{1.6ex}\\

        \footnotesize{Degraded} &
	    \footnotesize{ELAD (Ours)} &
	    \footnotesize{DifFace~\cite{DifFaceBlindFace2023yue}} &
	    \footnotesize{PGDiff~\cite{PGDiffGuidingDiffusion2023yang}} &
	    \footnotesize{PMRF~\cite{PosteriorMeanRectifiedFlow2024ohayon}}\\
	    
        \rule{0pt}{0.8ex}\\

        \multicolumn{\columns}{c}{\centered{\includegraphics[width=\totalwidth\linewidth]{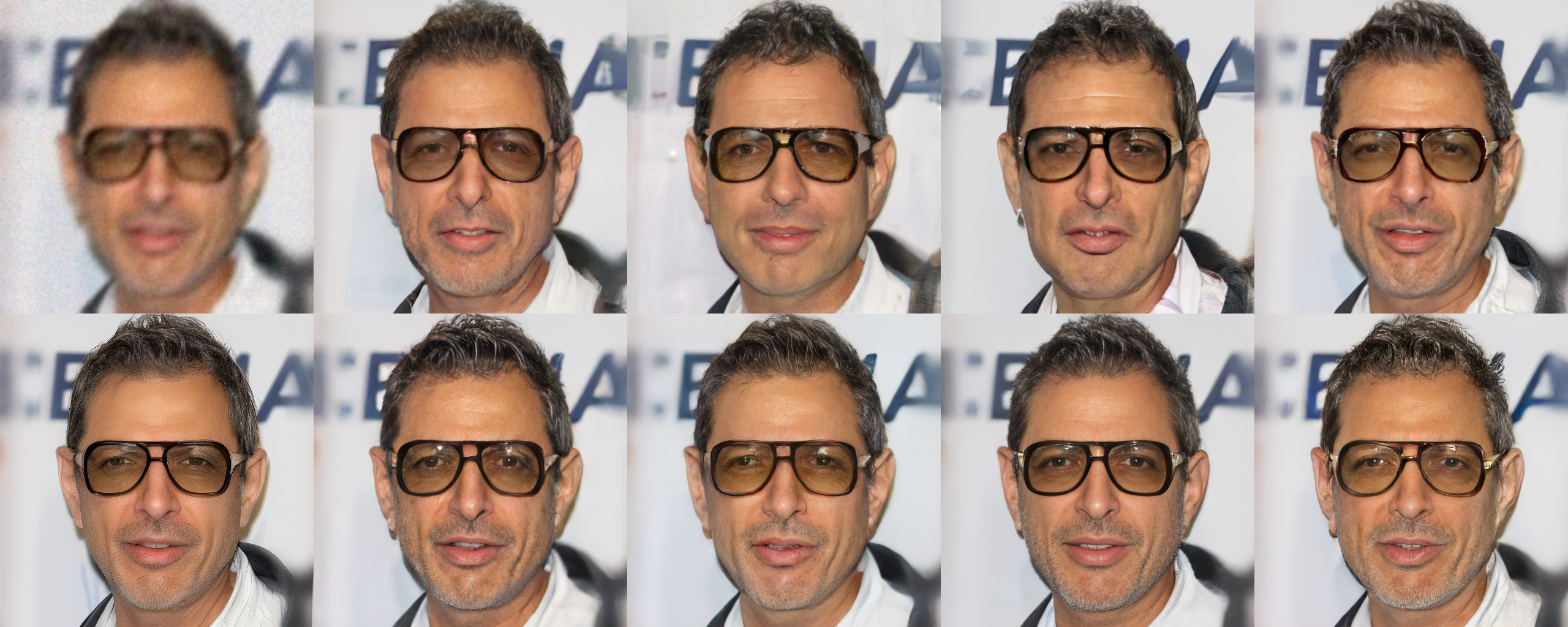}}} \\

        \rule{0pt}{0.8ex}\\

        \footnotesize{DiffBIR~\cite{DiffBIRBlindImage2024lin}} &
	    \footnotesize{RF++~\cite{RestoreFormerRealWorldBlind2023wang}} &
	    \footnotesize{GFPGAN~\cite{RealWorldBlindFace2024chen}} &
	    \footnotesize{CodeFormer~\cite{RobustBlindFace2022zhou}} &
	    \footnotesize{VQFR~\cite{VQFRBlindFace2022gu}}\\
	    
    \end{tabular}
    \caption{Restoration examples on CelebA-Test~\cite{RealWorldBlindFace2021wang}.}
    \label{fig:images_celeba}
\end{figure*}
\endgroup

\begingroup
\newcolumntype{M}[1]{>{\centering\arraybackslash}m{#1}}
\newcommand{\vcentered}[1]{\begin{tabular}{@{}l@{}} #1 \end{tabular}}
\setlength{\tabcolsep}{0pt} %
\renewcommand{\arraystretch}{0} %

\def\columns{5}
\def\totalwidth{0.9}

\FPeval{\colwidth}{clip(\totalwidth/\columns)}
\FPeval{\doublecolwidth}{clip(2*\totalwidth/\columns)}
\FPeval{\imgwidth}{\totalwidth/\columns *\columns}

\newcommand{\centered}[1]{\begin{tabular}{l} #1 \end{tabular}}

\begin{figure*}[tb]
    \centering
    \begin{tabular}{M{\colwidth\linewidth} M{\colwidth\linewidth} M{\colwidth\linewidth} M{\colwidth\linewidth} M{\colwidth\linewidth}}
    
	    \footnotesize{Degraded} &
	    \footnotesize{ELAD (Ours)} &
	    \footnotesize{DifFace~\cite{DifFaceBlindFace2023yue}} &
	    \footnotesize{PGDiff~\cite{PGDiffGuidingDiffusion2023yang}} &
	    \footnotesize{PMRF~\cite{PosteriorMeanRectifiedFlow2024ohayon}}\\
	    
        \rule{0pt}{0.8ex}\\

        \multicolumn{\columns}{c}{\centered{\includegraphics[width=\totalwidth\linewidth]{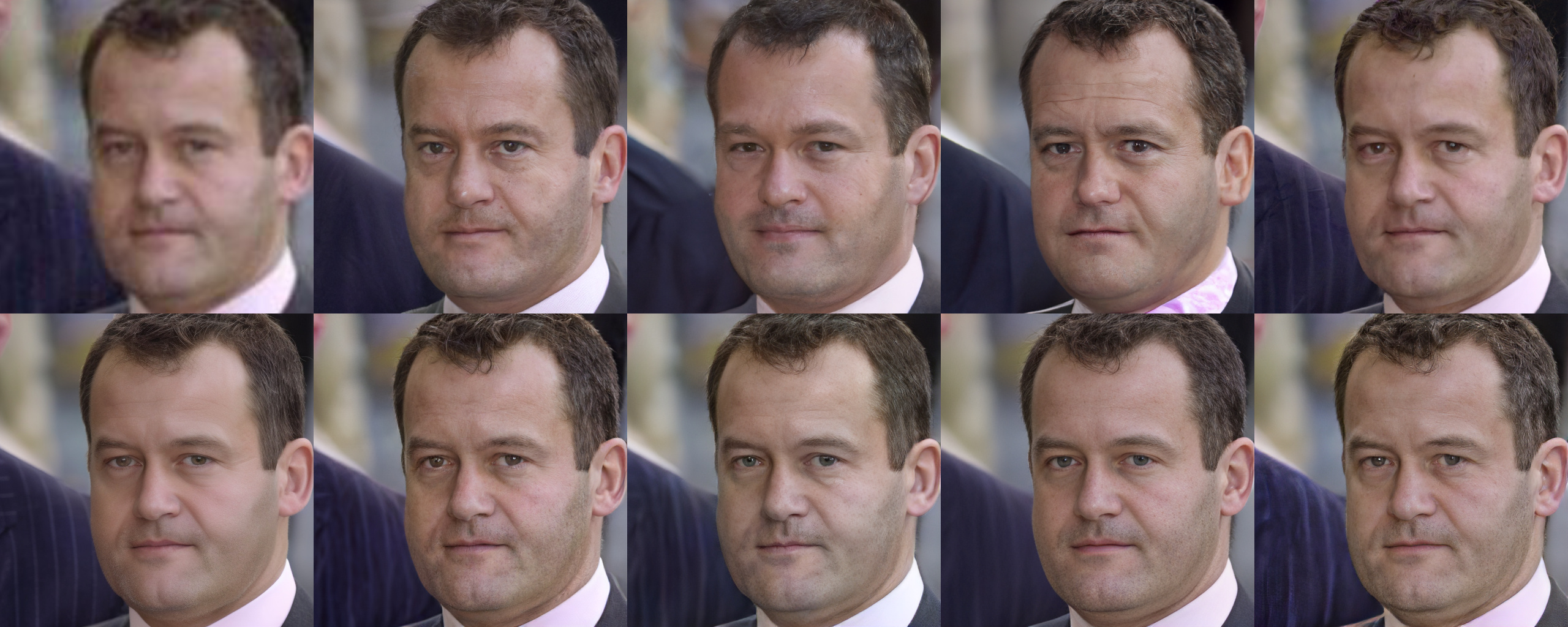}}} \\

        \rule{0pt}{0.8ex}\\

        \footnotesize{DiffBIR~\cite{DiffBIRBlindImage2024lin}} &
	    \footnotesize{RF++~\cite{RestoreFormerRealWorldBlind2023wang}} &
	    \footnotesize{GFPGAN~\cite{RealWorldBlindFace2024chen}} &
	    \footnotesize{CodeFormer~\cite{RobustBlindFace2022zhou}} &
	    \footnotesize{VQFR~\cite{VQFRBlindFace2022gu}}\\

        \rule{0pt}{1.6ex}\\

        \footnotesize{Degraded} &
	    \footnotesize{ELAD (Ours)} &
	    \footnotesize{DifFace~\cite{DifFaceBlindFace2023yue}} &
	    \footnotesize{PGDiff~\cite{PGDiffGuidingDiffusion2023yang}} &
	    \footnotesize{PMRF~\cite{PosteriorMeanRectifiedFlow2024ohayon}}\\
	    
        \rule{0pt}{0.8ex}\\

        \multicolumn{\columns}{c}{\centered{\includegraphics[width=\totalwidth\linewidth]{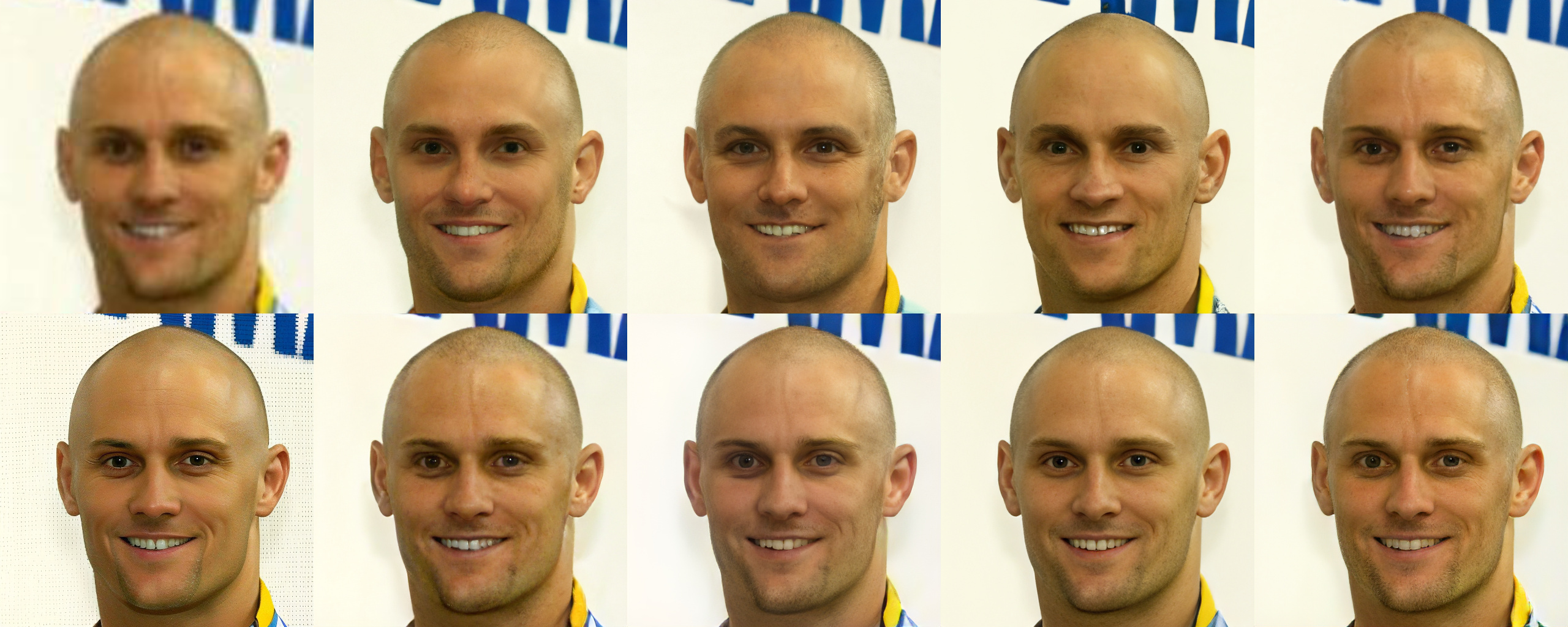}}} \\

        \rule{0pt}{0.8ex}\\

        \footnotesize{DiffBIR~\cite{DiffBIRBlindImage2024lin}} &
	    \footnotesize{RF++~\cite{RestoreFormerRealWorldBlind2023wang}} &
	    \footnotesize{GFPGAN~\cite{RealWorldBlindFace2024chen}} &
	    \footnotesize{CodeFormer~\cite{RobustBlindFace2022zhou}} &
	    \footnotesize{VQFR~\cite{VQFRBlindFace2022gu}}\\

        \rule{0pt}{1.6ex}\\

        \footnotesize{Degraded} &
	    \footnotesize{ELAD (Ours)} &
	    \footnotesize{DifFace~\cite{DifFaceBlindFace2023yue}} &
	    \footnotesize{PGDiff~\cite{PGDiffGuidingDiffusion2023yang}} &
	    \footnotesize{PMRF~\cite{PosteriorMeanRectifiedFlow2024ohayon}}\\
	    
        \rule{0pt}{0.8ex}\\

        \multicolumn{\columns}{c}{\centered{\includegraphics[width=\totalwidth\linewidth]{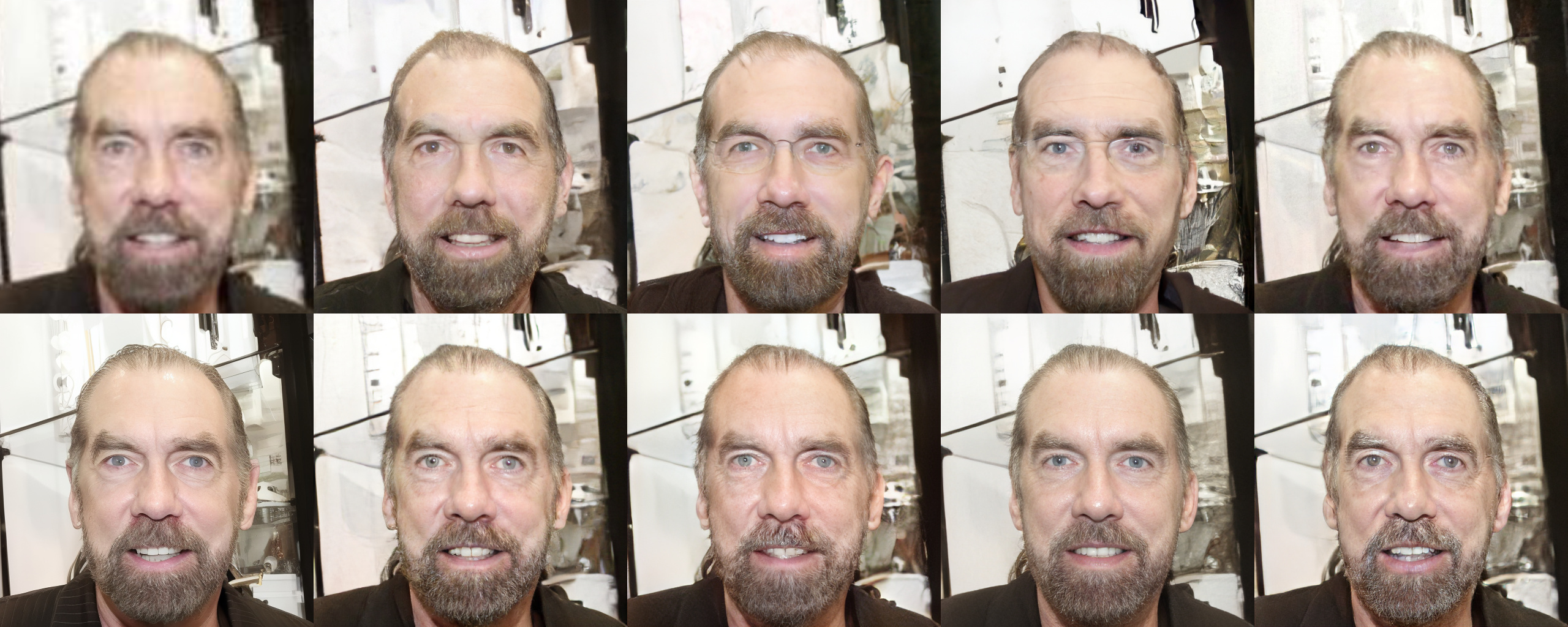}}} \\

        \rule{0pt}{0.8ex}\\

        \footnotesize{DiffBIR~\cite{DiffBIRBlindImage2024lin}} &
	    \footnotesize{RF++~\cite{RestoreFormerRealWorldBlind2023wang}} &
	    \footnotesize{GFPGAN~\cite{RealWorldBlindFace2024chen}} &
	    \footnotesize{CodeFormer~\cite{RobustBlindFace2022zhou}} &
	    \footnotesize{VQFR~\cite{VQFRBlindFace2022gu}}\\
	    
    \end{tabular}
    \caption{Restoration examples on LFW-Test~\cite{LabeledFacesWild2007huang}.}
    \label{fig:images_lfw}
\end{figure*}
\endgroup

\begingroup
\newcolumntype{M}[1]{>{\centering\arraybackslash}m{#1}}
\newcommand{\vcentered}[1]{\begin{tabular}{@{}l@{}} #1 \end{tabular}}
\setlength{\tabcolsep}{0pt} %
\renewcommand{\arraystretch}{0} %

\def\columns{5}
\def\totalwidth{0.9}

\FPeval{\colwidth}{clip(\totalwidth/\columns)}
\FPeval{\doublecolwidth}{clip(2*\totalwidth/\columns)}
\FPeval{\imgwidth}{\totalwidth/\columns *\columns}

\newcommand{\centered}[1]{\begin{tabular}{l} #1 \end{tabular}}

\begin{figure*}[tb]
    \centering
    \begin{tabular}{M{\colwidth\linewidth} M{\colwidth\linewidth} M{\colwidth\linewidth} M{\colwidth\linewidth} M{\colwidth\linewidth}}
    
	    \footnotesize{Degraded} &
	    \footnotesize{ELAD (Ours)} &
	    \footnotesize{DifFace~\cite{DifFaceBlindFace2023yue}} &
	    \footnotesize{PGDiff~\cite{PGDiffGuidingDiffusion2023yang}} &
	    \footnotesize{PMRF~\cite{PosteriorMeanRectifiedFlow2024ohayon}}\\
	    
        \rule{0pt}{0.8ex}\\

        \multicolumn{\columns}{c}{\centered{\includegraphics[width=\totalwidth\linewidth]{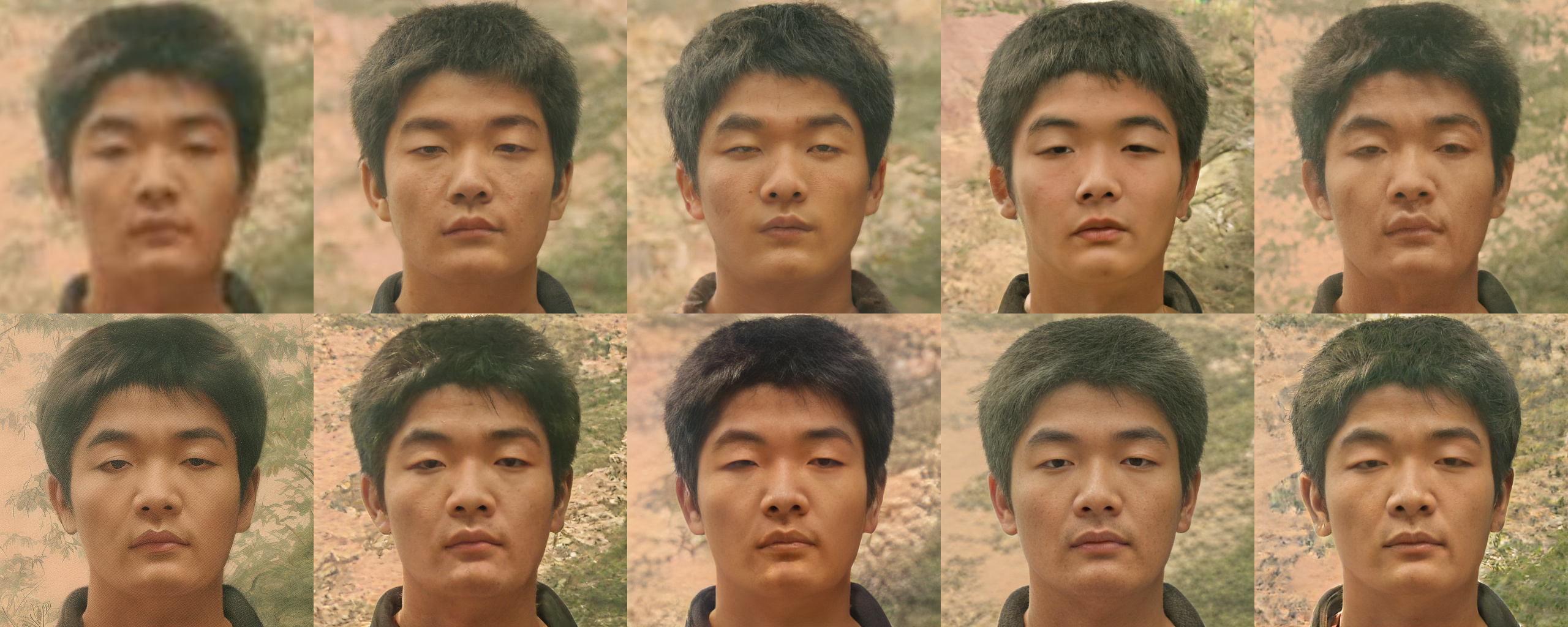}}} \\

        \rule{0pt}{0.8ex}\\

        \footnotesize{DiffBIR~\cite{DiffBIRBlindImage2024lin}} &
	    \footnotesize{RF++~\cite{RestoreFormerRealWorldBlind2023wang}} &
	    \footnotesize{GFPGAN~\cite{RealWorldBlindFace2024chen}} &
	    \footnotesize{CodeFormer~\cite{RobustBlindFace2022zhou}} &
	    \footnotesize{VQFR~\cite{VQFRBlindFace2022gu}}\\

        \rule{0pt}{1.6ex}\\

        \footnotesize{Degraded} &
	    \footnotesize{ELAD (Ours)} &
	    \footnotesize{DifFace~\cite{DifFaceBlindFace2023yue}} &
	    \footnotesize{PGDiff~\cite{PGDiffGuidingDiffusion2023yang}} &
	    \footnotesize{PMRF~\cite{PosteriorMeanRectifiedFlow2024ohayon}}\\
	    
        \rule{0pt}{0.8ex}\\

        \multicolumn{\columns}{c}{\centered{\includegraphics[width=\totalwidth\linewidth]{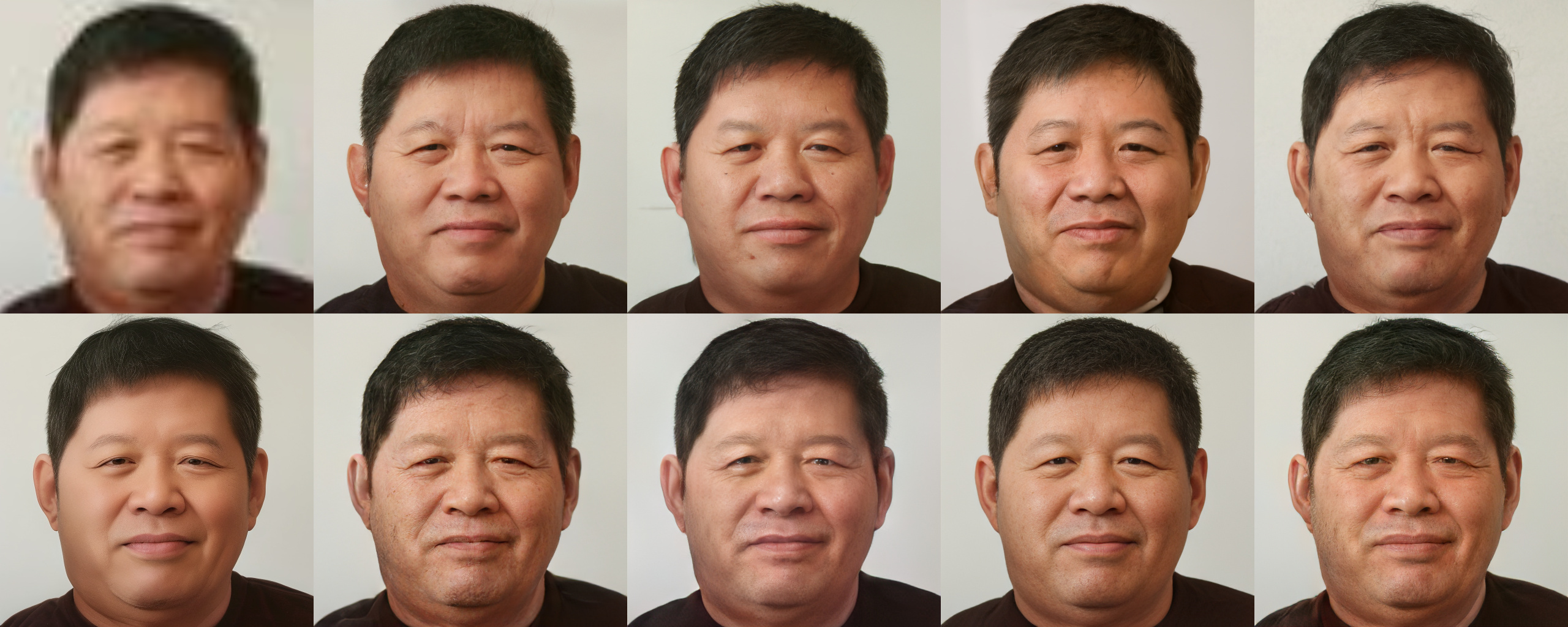}}} \\

        \rule{0pt}{0.8ex}\\

        \footnotesize{DiffBIR~\cite{DiffBIRBlindImage2024lin}} &
	    \footnotesize{RF++~\cite{RestoreFormerRealWorldBlind2023wang}} &
	    \footnotesize{GFPGAN~\cite{RealWorldBlindFace2024chen}} &
	    \footnotesize{CodeFormer~\cite{RobustBlindFace2022zhou}} &
	    \footnotesize{VQFR~\cite{VQFRBlindFace2022gu}}\\

        \rule{0pt}{1.6ex}\\

        \footnotesize{Degraded} &
	    \footnotesize{ELAD (Ours)} &
	    \footnotesize{DifFace~\cite{DifFaceBlindFace2023yue}} &
	    \footnotesize{PGDiff~\cite{PGDiffGuidingDiffusion2023yang}} &
	    \footnotesize{PMRF~\cite{PosteriorMeanRectifiedFlow2024ohayon}}\\
	    
        \rule{0pt}{0.8ex}\\

        \multicolumn{\columns}{c}{\centered{\includegraphics[width=\totalwidth\linewidth]{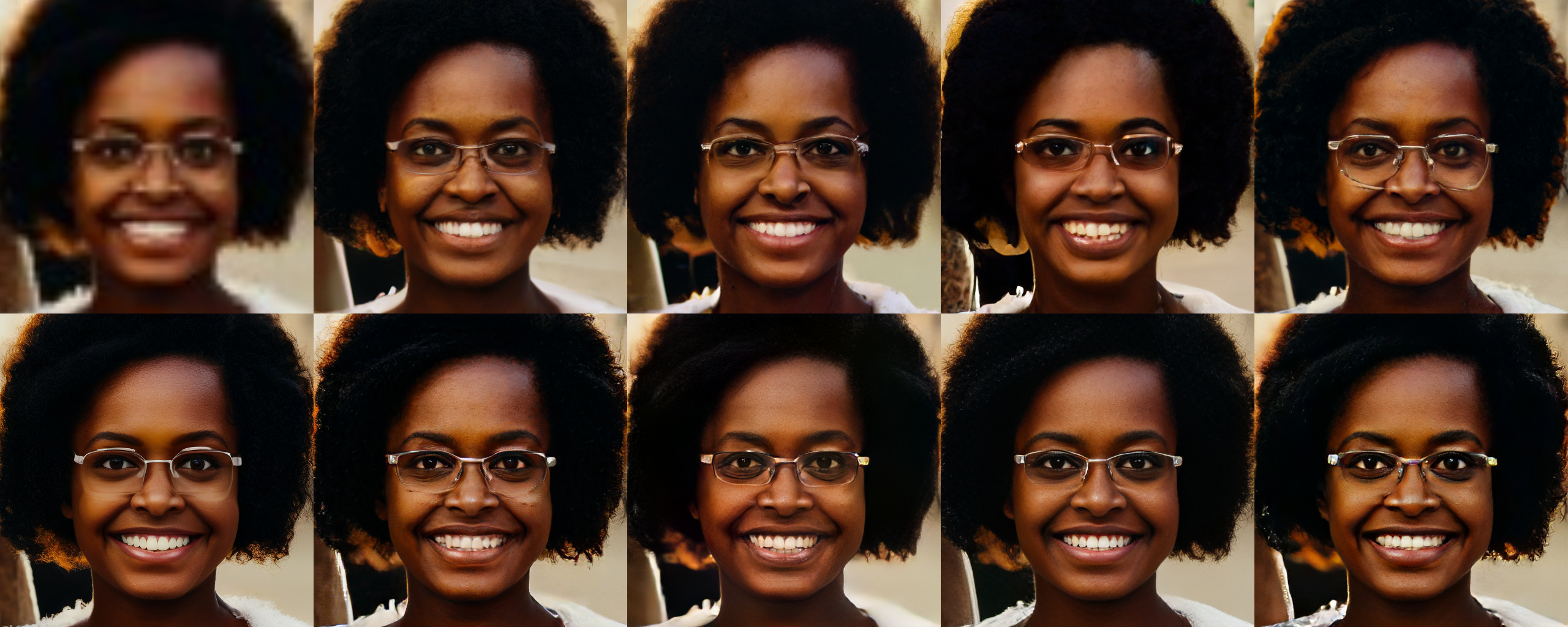}}} \\

        \rule{0pt}{0.8ex}\\

        \footnotesize{DiffBIR~\cite{DiffBIRBlindImage2024lin}} &
	    \footnotesize{RF++~\cite{RestoreFormerRealWorldBlind2023wang}} &
	    \footnotesize{GFPGAN~\cite{RealWorldBlindFace2024chen}} &
	    \footnotesize{CodeFormer~\cite{RobustBlindFace2022zhou}} &
	    \footnotesize{VQFR~\cite{VQFRBlindFace2022gu}}\\
	    
    \end{tabular}
    \caption{Restoration examples on WebPhot-Test~\cite{RealWorldBlindFace2021wang}.}
    \label{fig:images_webphoto}
\end{figure*}
\endgroup

\begingroup
\newcolumntype{M}[1]{>{\centering\arraybackslash}m{#1}}
\newcommand{\vcentered}[1]{\begin{tabular}{@{}l@{}} #1 \end{tabular}}
\setlength{\tabcolsep}{0pt} %
\renewcommand{\arraystretch}{0} %

\def\columns{5}
\def\totalwidth{0.9}

\FPeval{\colwidth}{clip(\totalwidth/\columns)}
\FPeval{\doublecolwidth}{clip(2*\totalwidth/\columns)}
\FPeval{\imgwidth}{\totalwidth/\columns *\columns}

\newcommand{\centered}[1]{\begin{tabular}{l} #1 \end{tabular}}

\begin{figure*}[tb]
    \centering
    \begin{tabular}{M{\colwidth\linewidth} M{\colwidth\linewidth} M{\colwidth\linewidth} M{\colwidth\linewidth} M{\colwidth\linewidth}}
    
	    \footnotesize{Degraded} &
	    \footnotesize{ELAD (Ours)} &
	    \footnotesize{DifFace~\cite{DifFaceBlindFace2023yue}} &
	    \footnotesize{PGDiff~\cite{PGDiffGuidingDiffusion2023yang}} &
	    \footnotesize{PMRF~\cite{PosteriorMeanRectifiedFlow2024ohayon}}\\
	    
        \rule{0pt}{0.8ex}\\

        \multicolumn{\columns}{c}{\centered{\includegraphics[width=\totalwidth\linewidth]{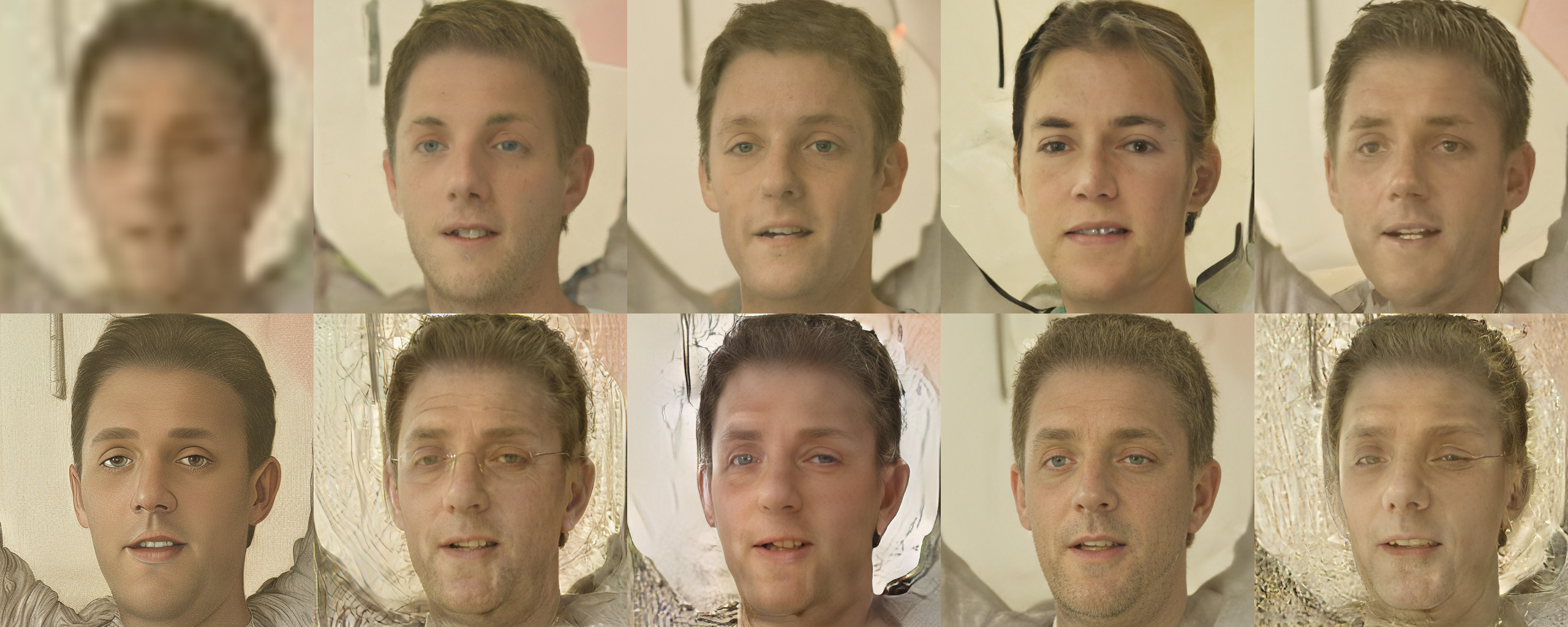}}} \\

        \rule{0pt}{0.8ex}\\

        \footnotesize{DiffBIR~\cite{DiffBIRBlindImage2024lin}} &
	    \footnotesize{RF++~\cite{RestoreFormerRealWorldBlind2023wang}} &
	    \footnotesize{GFPGAN~\cite{RealWorldBlindFace2024chen}} &
	    \footnotesize{CodeFormer~\cite{RobustBlindFace2022zhou}} &
	    \footnotesize{VQFR~\cite{VQFRBlindFace2022gu}}\\

        \rule{0pt}{1.6ex}\\

        \footnotesize{Degraded} &
	    \footnotesize{ELAD (Ours)} &
	    \footnotesize{DifFace~\cite{DifFaceBlindFace2023yue}} &
	    \footnotesize{PGDiff~\cite{PGDiffGuidingDiffusion2023yang}} &
	    \footnotesize{PMRF~\cite{PosteriorMeanRectifiedFlow2024ohayon}}\\
	    
        \rule{0pt}{0.8ex}\\

        \multicolumn{\columns}{c}{\centered{\includegraphics[width=\totalwidth\linewidth]{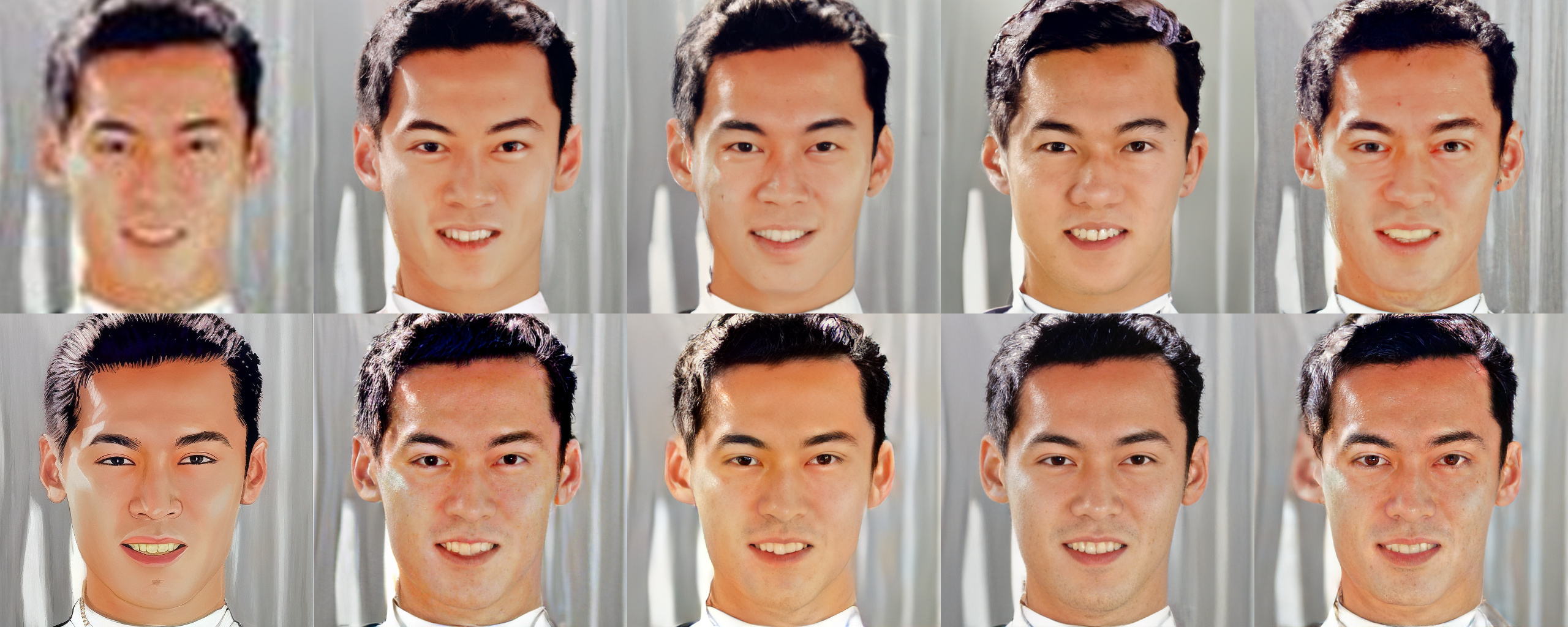}}} \\

        \rule{0pt}{0.8ex}\\

        \footnotesize{DiffBIR~\cite{DiffBIRBlindImage2024lin}} &
	    \footnotesize{RF++~\cite{RestoreFormerRealWorldBlind2023wang}} &
	    \footnotesize{GFPGAN~\cite{RealWorldBlindFace2024chen}} &
	    \footnotesize{CodeFormer~\cite{RobustBlindFace2022zhou}} &
	    \footnotesize{VQFR~\cite{VQFRBlindFace2022gu}}\\

        \rule{0pt}{1.6ex}\\

        \footnotesize{Degraded} &
	    \footnotesize{ELAD (Ours)} &
	    \footnotesize{DifFace~\cite{DifFaceBlindFace2023yue}} &
	    \footnotesize{PGDiff~\cite{PGDiffGuidingDiffusion2023yang}} &
	    \footnotesize{PMRF~\cite{PosteriorMeanRectifiedFlow2024ohayon}}\\
	    
        \rule{0pt}{0.8ex}\\

        \multicolumn{\columns}{c}{\centered{\includegraphics[width=\totalwidth\linewidth]{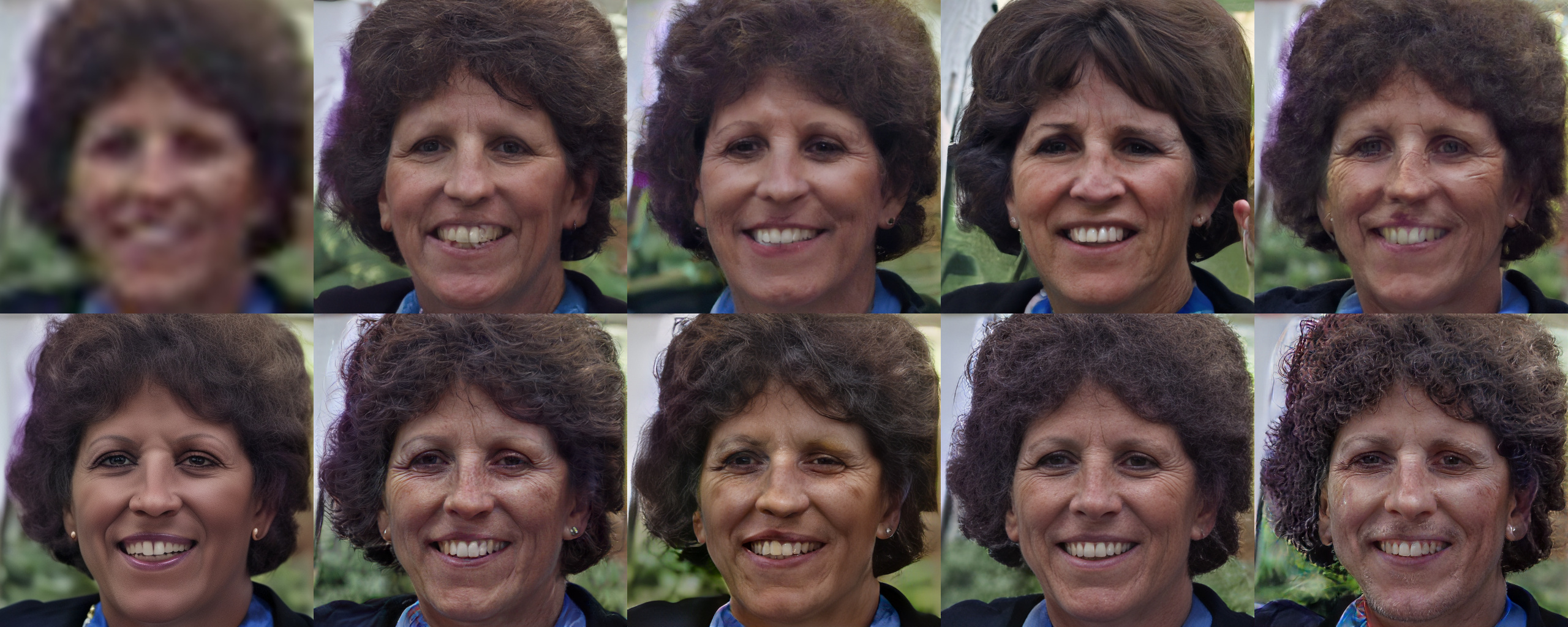}}} \\

        \rule{0pt}{0.8ex}\\

        \footnotesize{DiffBIR~\cite{DiffBIRBlindImage2024lin}} &
	    \footnotesize{RF++~\cite{RestoreFormerRealWorldBlind2023wang}} &
	    \footnotesize{GFPGAN~\cite{RealWorldBlindFace2024chen}} &
	    \footnotesize{CodeFormer~\cite{RobustBlindFace2022zhou}} &
	    \footnotesize{VQFR~\cite{VQFRBlindFace2022gu}}\\
	    
    \end{tabular}
    \caption{Restoration examples on WIDER-Test~\cite{WIDERFACEFace2016yang}.}
    \label{fig:images_wider}
\end{figure*}
\endgroup

\section{Implementation details}
\label{sec:imp_supp}

\subsection{Degradation estimator}
\label{sec:deg_est_supp}

Our degradation estimator consists of a ConvNext-Large~\cite{ConvNet2020s2022liua} for feature extraction, two convolutional layers to sub-sample the extracted features, followed by an MLP head to transform the feature into four parameter values.
The ConvNext is initialized from a robust ImageNet~\cite{ImageNetLargescaleHierarchical2009denga} checkpoint~\cite{ComprehensiveStudyRobustness2024liu}.
The architecture is illustrated in \cref{fig:deg_est_arch}.
We train the estimator on the FFHQ dataset~\cite{StyleBasedGeneratorArchitecture2019karrasb} at $512{\times}512$ resolution for 300K steps and batch size 16 using the AdamW~\cite{DecoupledWeightDecay2018loshchilov} optimizer ($\beta_1{=}0.9$, $\beta_2 {=} 0.999$, $\epsilon {=} 10^{-8}$) with a constant learning rate of $2.5{\times}10^{-5}$, weight decay of $0.05$, linear warmup for 1K steps, and linear cooldown for $20\%$ steps~\cite{ScalingLawsComputeOptimal2024hagele} (for a total of 360K steps).
\begin{figure}[tbh]
    \centering
    \includegraphics[width=0.8\linewidth]{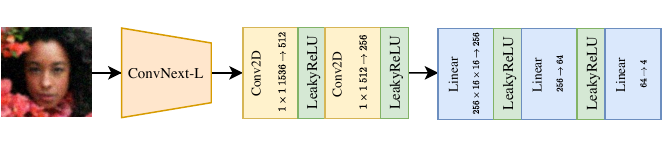}
    \caption{Degradation estimator architecture for BFR.}
    \label{fig:deg_est_arch}
\end{figure}

\subsection{ELAD}
\label{sec:elad_supp}

We detail in \Cref{alg:elad} \emph{Empirical Likelihood Approximation with Diffusion prior} (\textbf{ELAD}).
We start with a concise summary of diffusion model notations used in the algorithm, based on DDIM~\cite{DenoisingDiffusionImplicit2020song}, followed by the design choices taken in ELAD.

\paragraph{Notations.}
Denote the probability density function of clean images by $q(\vx_0)$.
We construct a forward noising process as a Markov chain,
\begin{align}
    q(\vx_t|\vx_{t-1}) = \mathcal{N} (\sqrt{1-\beta_t} \vx_{t-1}, \beta_t \mI),
\end{align}
where $t\in[1,\ldots,T]$ is a timestep and $\{\beta\}_{t=1}^T$ is a noise scheduler and $q(\vx_T)\approx\mathcal{N}(\vzero, \mI)$.
We can sample from $q(\vx_t|\vx_0)$ directly by noting that
\begin{align}
    q(\vx_t|\vx_0) = \mathcal{N} (\sqrt{1-\bar\alpha_t} \vx_0, (1-\bar\alpha_t) \mI),
\end{align}
where $\bar\alpha_t=\prod_{s=1}^t \alpha_s$ and $\alpha_t=(1-\beta_t)$.
Under DDIM, the task of a diffusion model is to construct a reverse denoising process by learning the following densities:
\begin{align}
    p_\theta(\vx_{t-1}|\vx_t) = \mathcal{N} \lft(\sqrt{\bar\alpha_{t-1}}\hat\vx_0^t + \sqrt{1-\bar\alpha_{t-1}-\sigma_t^2}\varepsilon_\theta(\vx_t,t), \sigma_t^2\mI\rgt),
\end{align}
where $\varepsilon_\theta(\vx_t,t)$ is a time-aware MMSE denoiser, $\hat\vx_0^t=(\vx_t-\sqrt{1-\bar\alpha_t}\varepsilon_\theta(\vx_t,t))/\bar\alpha_t$, and $\sigma_t=\eta \beta_t \frac{1-\bar\alpha_{t-1}}{1-\bar\alpha_t}$.
Using $\eta\in[0,1]$, we can control the amount of random noise injected in each step.
Choosing $\eta=0$ results in a deterministic sampler, while $\eta=1$ leads to the DDPM~\cite{DenoisingDiffusionProbabilistic2020hoa} sampler.

\paragraph{Algorithm design.}
Following DPS~\cite{DiffusionPosteriorSampling2022chunga}, we guide the diffusion process by taking steps in the direction of the score-likelihood $p(\vx_t|\vy)$ at time $t$. Similar to prior work~\cite{DiffusionPosteriorSampling2022chunga,PseudoinverseGuidedDiffusionModels2022songa,DenoisingDiffusionModels2023zhua}, we approximate $p(\vx_t|\vy){\approx}p(\hat\vx_0^t|\vy)$ and compute $p(\hat\vx_0^t|\vy)$ using ELA based on the estimated degradation $\va_\vtheta(\vy)$.
Like DifFace~\cite{DifFaceBlindFace2023yue}, we perform $100$ denoising steps, starting from an intermediate timestep ($T_0=400$ out of $T=1,000$ steps) initialized by an MMSE restoration of $\vy$ for accelerated sampling.
We use a DDIM~\cite{DenoisingDiffusionImplicit2020song} sampler with $\eta=0.5$, where we recompute the predicted noise after each score-likelihood step, similar to DiffPIR~\cite{DenoisingDiffusionModels2023zhua} for increased stability.
We adapt the diffusion prior trained by DifFace~\cite{DifFaceBlindFace2023yue}. Still, we emphasize that our method does not depend on a specific diffusion prior, and the results could be improved by utilizing better priors.

As we approximate $p(\vx_t|\vy){\approx}p(\hat\vx_0^t|\vy)$, we employ a dynamic step size.
We achieve good performance using a step size equal to the ratio between the current signal-to-noise ratio (SNR) and the SNR at the initial timestep, multiplied by a constant $\lambda$ (${=}10^{-2}$ in practice).
Moreover, similar to the classifier (free) guidance literature~\cite{PhotorealisticTextImageDiffusion2022saharia}, we find that clipping the score likelihood further stabilizes the process.

To approximate the mean $\mu(\hat\vx_0^t, \va_\theta(\vy))$ we compute an empirical sample mean by sampling $16$ samples from ${p_{\rvy|\rvx,\rva}(\cdot|\rvx=\hat\vx_0^t,\rva=\va_\vtheta(\vy))}$ each denoising step.
Moreover, instead of assuming isotropic likelihood, we approximate the standard deviation empirically using the same samples, resulting in a diagonal covariance.

\RestyleAlgo{ruled}

\begingroup

\SetKwComment{Comment}{ // }{}
\newcommand\mycommfont[1]{\footnotesize\ttfamily\textcolor{cvprblue}{#1}}
\SetCommentSty{mycommfont}

\begin{algorithm*}[hbt!]
\DontPrintSemicolon
\LinesNumbered
\caption{ELAD - Blind Restoration Diffusion Sampler. Full version of \Cref{alg:elad_short}}
\label{alg:elad}
\KwData{measurement $\vy$, degradation estimator $\va_\theta$, MMSE regressor $f$, diffusion noise predictor $\varepsilon_\theta$, diffusion schedule $\{\bar\alpha_t\}_{t=1}^{T}$, start time $T_0 \leq T$, DDIM coefficient $\eta$, ELA coefficient $\lambda$}
\KwResult{a restored image $\vx_0$}
$\vx_{T_0} \sim \mathcal{N} (\sqrt{1-\bar\alpha_{T_0}} f(\vy), (1-\bar\alpha_{T_0}) \mI)$ \tcp*{noise MMSE restoration to step $T_0$}
$\hat\va = \va_\theta(\vy)$ \tcp*{predict degradation}
\For{$t=T_0$ \KwTo $1$}{
    $\hat\vx_0^t=(\vx_t-\sqrt{1-\bar\alpha_t}\varepsilon_\theta(\vx_t,t))/\bar\alpha_t$ \tcp*{perform denoising step}
    $\lambda_t = \tfrac{1}{\bar\alpha_{t-1}}\tfrac{\text{SNR}_t}{\text{SNR}_{T_0}} \lambda$ \tcp*{compute dynamic step size}
    $g = \nabla_{\hat\vx_t} \norm{\vy - \mu(\hat\vx_0^t, \hat\va)}_2^2$ \tcp*{compute score likelihood}
    $\hat\vx_0^t = \hat\vx_0^t - \lambda_t \cdot g.\text{clamp}(-1,1) $ \tcp*{perform likelihood step}
    $\hat\varepsilon = \tfrac{1}{\sqrt{1-\bar\alpha_t}} (\vx_t - \sqrt{\bar\alpha_t}\hat\vx_0^t)$ \tcp*{compute effective noise}
    $\vx_{t-1}=\text{DDIMStep}(\hat\vx_0^t, \hat\varepsilon, \eta)$ \tcp*{perform DDIM step}
}
\end{algorithm*}

\endgroup

\section{Datasets analysis \& synthesis}

\subsection{Real-world datasets analysis}

Prior work~\cite{RobustBlindFace2022zhou} considered the degradations in LFW simpler than those in WebPhoto and WIDER.
However, they could not justify those claims quantitatively as the degradations in those datasets are unknown.
Using our degradation estimator, we presented in \Cref{fig:deg_est_real_world} the estimated distribution of degradation parameters in BFR real-world datasets.
The estimation confirms that images in the LFW dataset have undergone blur and downsampling operations sampled from a narrower distribution centered on lower values than WebPhoto and WIDER distributions, leading to a simpler restoration task.
Yet, we also reveal that LFW images were compressed more aggressively compared to other datasets, an observation that might not have been easily made before.

\begingroup
\newcolumntype{M}[1]{>{\centering\arraybackslash}m{#1}}
\newcommand{\vcentered}[1]{\begin{tabular}{@{}l@{}} #1 \end{tabular}}
\setlength{\tabcolsep}{0pt} %
\renewcommand{\arraystretch}{0} %

\newcommand{\centered}[1]{\begin{tabular}{l} #1 \end{tabular}}

\newcommand{\addimgcol}[8][1]{
	\centered{
	   	\begin{tikzpicture}[
	   		baseline=-2.45,
	   		spy using outlines={magnification=#3, circle, height=#8, width=#8, yellow, every spy on node/.append style={thick}, connect spies},
	   		]
			\node[inner sep=0pt]{\scalebox{#1}[1]{\adjincludegraphics[width=\hero, trim={#4}, clip]{images/ours/#7-bsds/compressed/#2}}};
			\spy on (#5) in node at (#6);
		\end{tikzpicture}}&
		
		\centered{
		\begin{tikzpicture}[
	   		baseline=-2.45,
	   		spy using outlines={magnification=#3, circle, height=#8, width=#8, yellow, every spy on node/.append style={thick}, connect spies},
	   		]
			\node[inner sep=0pt]{\scalebox{#1}[1]{\adjincludegraphics[width=\hero, trim={#4}, clip]{images/qgac/#7-bsds/#2}}};
			\spy on (#5) in node at (#6);
		\end{tikzpicture}}&
		
		\centered{
		\begin{tikzpicture}[
	   		baseline=-2.45,
	   		spy using outlines={magnification=#3, circle, height=#8, width=#8, yellow, every spy on node/.append style={thick}, connect spies},
	   		]
			\node[inner sep=0pt]{\scalebox{#1}[1]{\adjincludegraphics[width=\hero, trim={#4}, clip]{images/qgac-gan/#7-bsds/#2}}};
			\spy on (#5) in node at (#6);
		\end{tikzpicture}}&
		
		\centered{
	   	\begin{tikzpicture}[
	   		baseline=-2.45,
	   		spy using outlines={magnification=#3, circle, height=#8, width=#8, yellow, every spy on node/.append style={thick}, connect spies},
	   		]
			\node[inner sep=0pt]{\scalebox{#1}[1]{\adjincludegraphics[width=\hero, trim={#4}, clip]{images/bahat/#7-bsds/fake_0/#2}}};
			\spy on (#5) in node at (#6);
		\end{tikzpicture}}&

		\centered{
		\begin{tikzpicture}[
	   		baseline=-2.45,
	   		spy using outlines={magnification=#3, circle, height=#8, width=#8, yellow, every spy on node/.append style={thick}, connect spies},
	   		]
			\node[inner sep=0pt]{\scalebox{#1}[1]{\adjincludegraphics[width=\hero, trim={#4}, clip]{images/ours-p/#7-bsds/fake_0/#2}}};
			\spy on (#5) in node at (#6);
		\end{tikzpicture}}&
		
		\centered{
		\begin{tikzpicture}[
	   		baseline=-2.45,
	   		spy using outlines={magnification=#3, circle, height=#8, width=#8, yellow, every spy on node/.append style={thick}, connect spies},
	   		]
			\node[inner sep=0pt]{\scalebox{#1}[1]{\adjincludegraphics[width=\hero, trim={#4}, clip]{images/ours/#7-bsds/real/#2}}};
			\spy on (#5) in node at (#6);
		\end{tikzpicture}}
		
	    \\
}

\begin{figure}[tb]
    \centering
    \begin{tabular}{c c c c c c}
    
	    \multicolumn{2}{c}{\footnotesize{LFW}} &
	    \multicolumn{2}{c}{\footnotesize{WebPhoto}} &
	    \multicolumn{2}{c}{\footnotesize{WIDER}}
	    \\

        \rule{0pt}{0.8ex}\\
        
        \cmidrule(lr){1-2}
        \cmidrule(lr){3-4}
        \cmidrule(lr){5-6}

        \footnotesize{Real} &
        \footnotesize{Synthetic} &
        \footnotesize{Real} &
        \footnotesize{Synthetic} &
        \footnotesize{Real} &
        \footnotesize{Synthetic}
        \\
	    
        \rule{0pt}{0.8ex}\\

        \centered{\includegraphics[width=0.1667\linewidth]{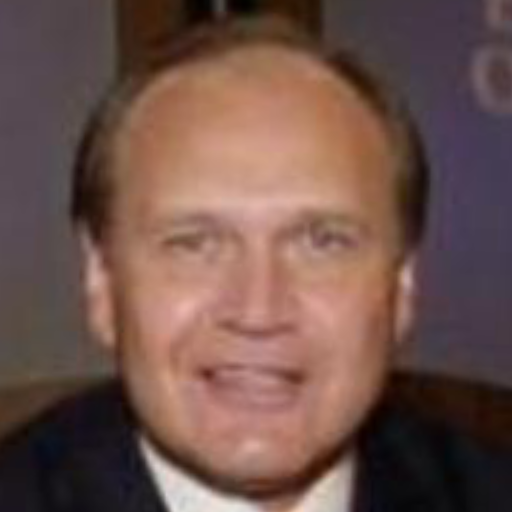}} &
        \centered{\includegraphics[width=0.1667\linewidth]{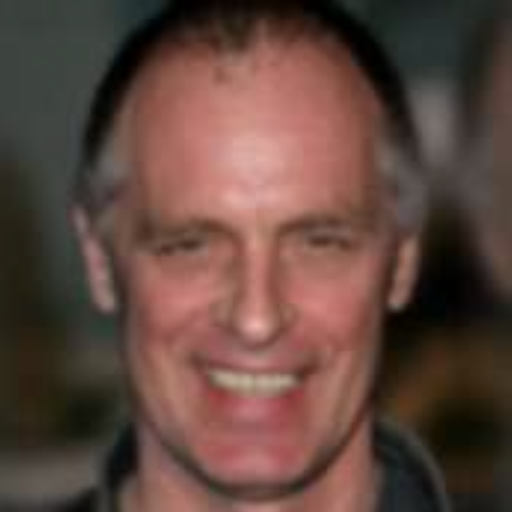}} &
        \centered{\includegraphics[width=0.1667\linewidth]{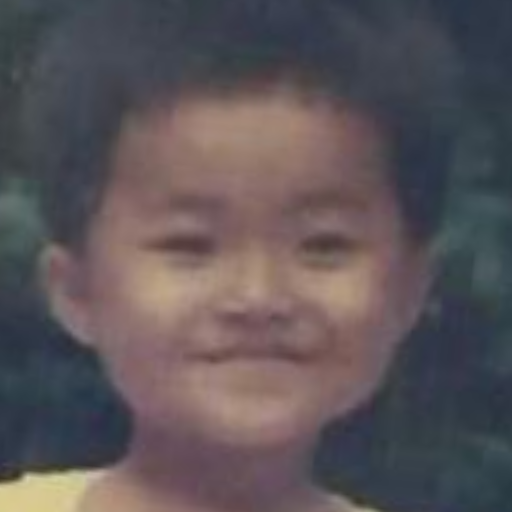}} &
        \centered{\includegraphics[width=0.1667\linewidth]{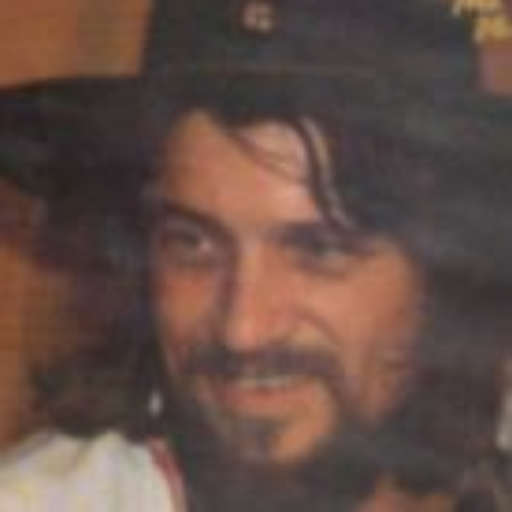}} &
        \centered{\includegraphics[width=0.1667\linewidth]{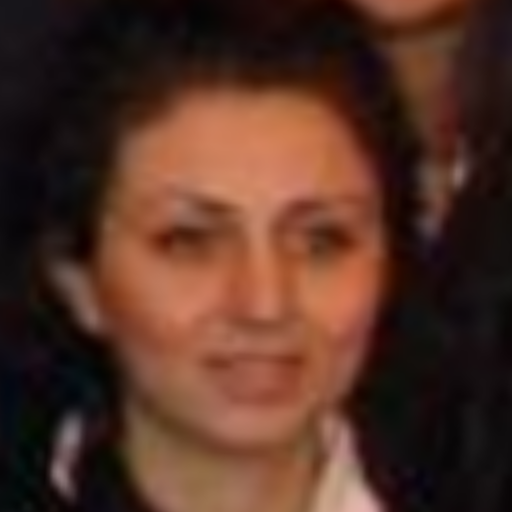}} &
        \centered{\includegraphics[width=0.1667\linewidth]{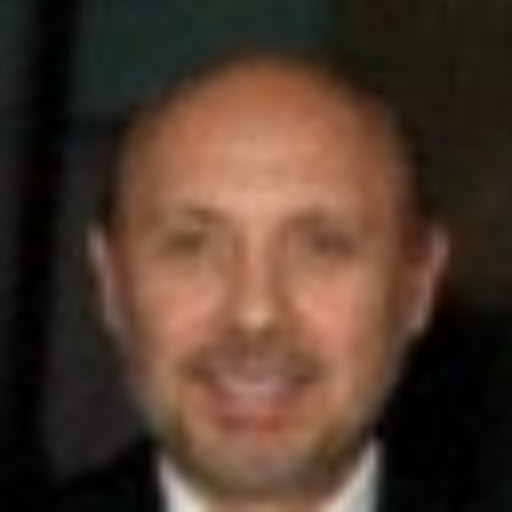}}
	    \\

        \centered{\includegraphics[width=0.1667\linewidth]{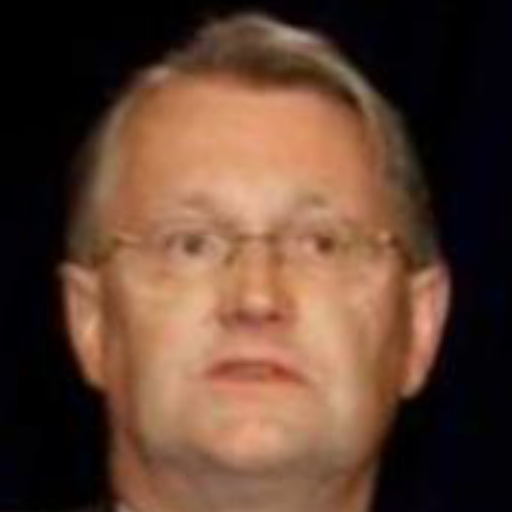}} &
        \centered{\includegraphics[width=0.1667\linewidth]{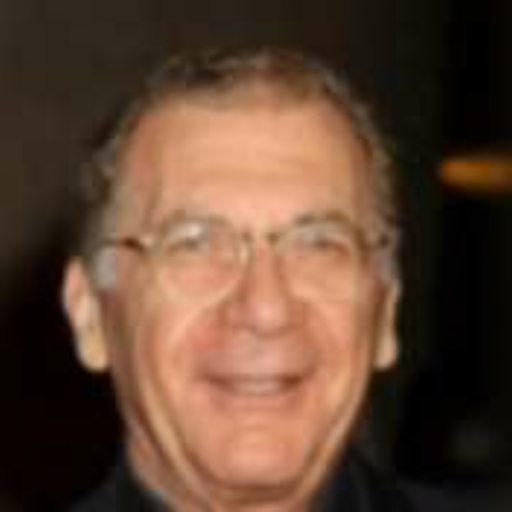}} &
        \centered{\includegraphics[width=0.1667\linewidth]{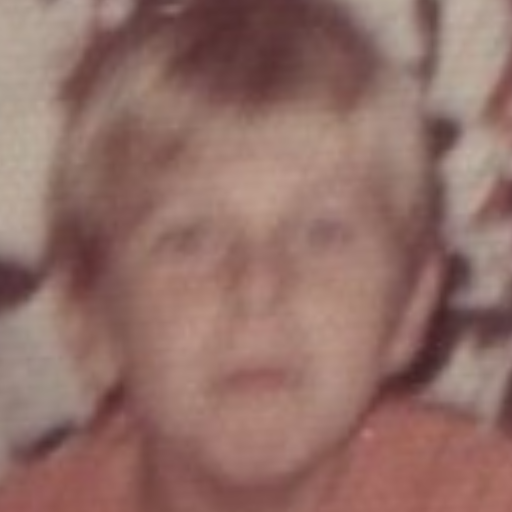}} &
        \centered{\includegraphics[width=0.1667\linewidth]{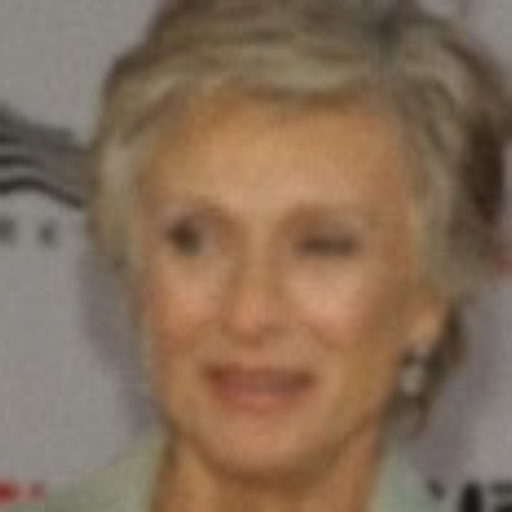}} &
        \centered{\includegraphics[width=0.1667\linewidth]{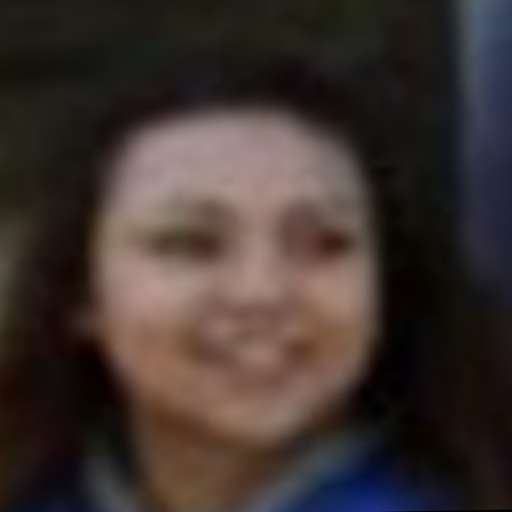}} &
        \centered{\includegraphics[width=0.1667\linewidth]{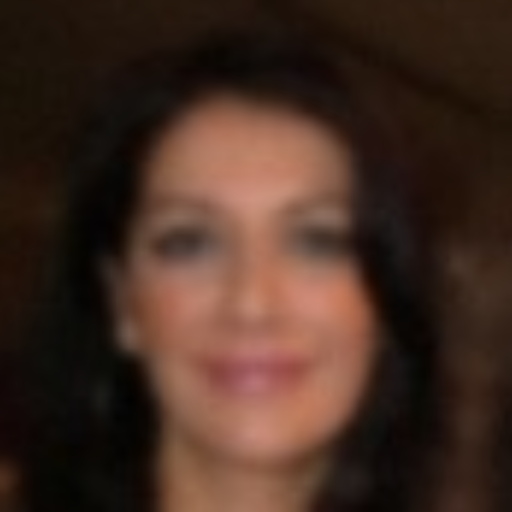}}
	    \\

        \centered{\includegraphics[width=0.1667\linewidth]{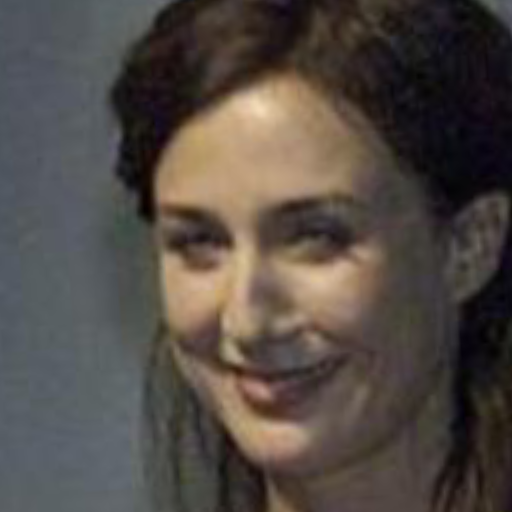}} &
        \centered{\includegraphics[width=0.1667\linewidth]{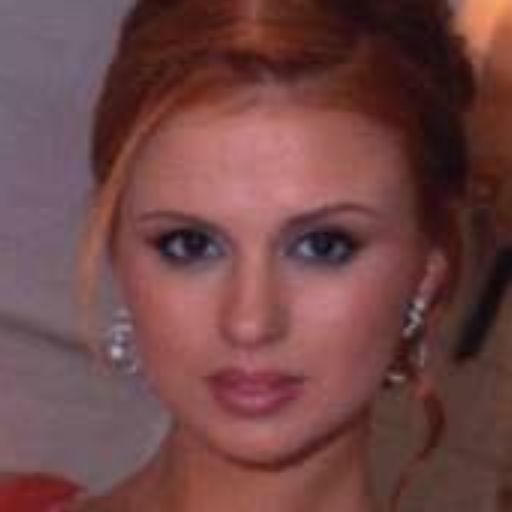}} &
        \centered{\includegraphics[width=0.1667\linewidth]{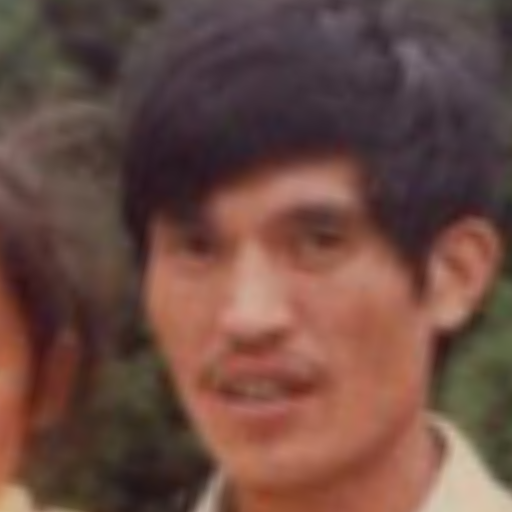}} &
        \centered{\includegraphics[width=0.1667\linewidth]{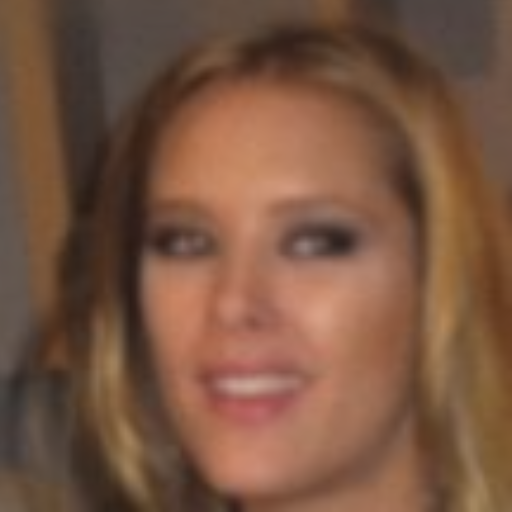}} &
        \centered{\includegraphics[width=0.1667\linewidth]{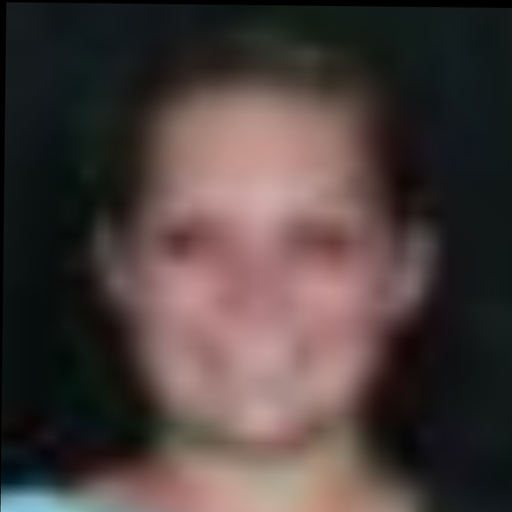}} &
        \centered{\includegraphics[width=0.1667\linewidth]{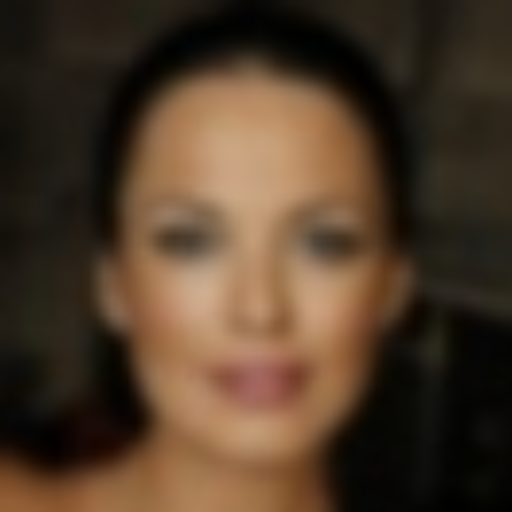}}
	    \\
    \end{tabular}
    \caption{
    Images from real datasets paired with images from CelebA synthetically degraded by the predicted degradation.
    }
    \label{fig:data_example}
\end{figure}
\endgroup

\subsection{Synthetic datasets examples}

\label{sec:datasets_supp}

Throughout the paper we use the real datasets LFW-Test, WebPhotos-Test, and WIDER-Test to the proposed tools.
In \cref{sec:deg_est} we additionally demonstrate how we can leverage the degradation estimator to analyze those datasets.

To demonstrate the effectiveness of our degradation estimator, we present in \Cref{fig:data_example} examples of degraded images from each dataset, and we predict the degradation of each. Using each predicted degradation we synthesize a synthetic degraded image from a clean CelebA-Test image.

As described in \cref{sec:deg_est}, we approximate the distribution of $\rva$ for each dataset by predicting the degradations of each image using our estimator and fitting a KDE over the predictions. In \Cref{sec:datasets_supp} we present examples from the synthetic datasets alongside images from the real datasets. Note that in this figures the image are not paired, as each degraded image is created by sampling randomly from the estimated distribution of $A$.

\input{figs/dataset_mimic_appendix}

\end{document}